\DeclareMathOperator*{\argmin}{arg\,min}
\newtheorem*{rep@theorem}{\rep@title}
\newcommand{\newreptheorem}[2]{%
\newenvironment{rep#1}[1]{%
 \def\rep@title{#2 \ref{##1}}%
 \begin{rep@theorem}}%
 {\end{rep@theorem}}}
\newtheorem{lemma}{Lemma}
\newtheorem{proposition}{Proposition}
\newtheorem{theorem}{Theorem}
\newtheorem{corollary}{Corollary}
\newtheorem{assumption}{Assumption}
\newcommand\R{\ensuremath{\mathbb{R}}} 
\newcommand\refsec[1]{Section~\ref{sec:#1}}
\newcommand\reffig[1]{Figure~\ref{fig:#1}}
\newcommand\reftab[1]{Table~\ref{tab:#1}}
\newcommand\refapp[1]{Appendix~\ref{sec:#1}}
\icmltitlerunning{Concept Bottleneck Models}
\begin{document}

\twocolumn[
\icmltitle{Concept Bottleneck Models}



\icmlsetsymbol{equal}{*}

\begin{icmlauthorlist}
\icmlauthor{Pang Wei Koh}{equal,stan}
\icmlauthor{Thao Nguyen}{equal,stan,goo}
\icmlauthor{Yew Siang Tang}{equal,stan}\\
\icmlauthor{Stephen Mussmann}{stan}
\icmlauthor{Emma Pierson}{stan}
\icmlauthor{Been Kim}{goo}
\icmlauthor{Percy Liang}{stan}
\end{icmlauthorlist}

\icmlaffiliation{stan}{Stanford University}
\icmlaffiliation{goo}{Google Research}

\icmlcorrespondingauthor{Pang Wei Koh}{pangwei@cs.stanford.edu}
\icmlcorrespondingauthor{Been Kim}{beenkim@google.com}
\icmlcorrespondingauthor{Percy Liang}{pliang@cs.stanford.edu}

\icmlkeywords{Machine Learning, ICML}

\vskip 0.3in
]



\printAffiliationsAndNotice{\icmlEqualContribution} 

\begin{abstract}
  We seek to learn models that we can interact with using high-level concepts: if the model did not think there was a bone spur in the x-ray, would it still predict severe arthritis? State-of-the-art models today do not typically support the manipulation of concepts like ``the existence of bone spurs'', as they are trained end-to-end to go directly from raw input (e.g., pixels) to output (e.g., arthritis severity). We revisit the classic idea of first predicting concepts that are provided at training time, and then using these concepts to predict the label. By construction, we can intervene on these \emph{concept bottleneck models} by editing their predicted concept values and propagating these changes to the final prediction. On x-ray grading and bird identification, concept bottleneck models achieve competitive accuracy with standard end-to-end models, while enabling interpretation in terms of high-level clinical concepts (``bone spurs'') or bird attributes (``wing color''). These models also allow for richer human-model interaction: accuracy improves significantly if we can correct model mistakes on concepts at test time.

\end{abstract}
\vspace{-5mm}
\section{Introduction}\label{sec:intro}

Suppose that a radiologist is collaborating with a machine learning model to grade the severity of knee osteoarthritis. She might ask why the model made its prediction---did it deem the space between the knee joints too narrow? Or she might seek to intervene on the model---if she told it that the x-ray showed a bone spur, would its prediction change?

State-of-the-art models today do not typically support such queries: they are end-to-end models that go directly from raw input $x$ (e.g., pixels) to target $y$ (e.g., arthritis severity), and we cannot easily interact with them using the same high-level concepts that practitioners reason with, like ``joint space narrowing'' or ``bone spurs''.

We approach this problem by revisiting the simple idea of first predicting an intermediate set of human-specified concepts $c$
like ``joint space narrowing'' and ``bone spurs'',
then using $c$ to predict the target $y$. In this paper, we refer to such models as \emph{concept bottleneck models}.
These models are trained on data points $(x, c, y)$, where the input $x$ is annotated with both concepts $c$ and target $y$.
At test time, they take in an input $x$, predict concepts $\hat{c}$, and then use those concepts to predict the target $\hat{y}$ (\reffig{intro}).

\begin{figure}[t]
  \begin{center}
    \includegraphics[width=\linewidth]{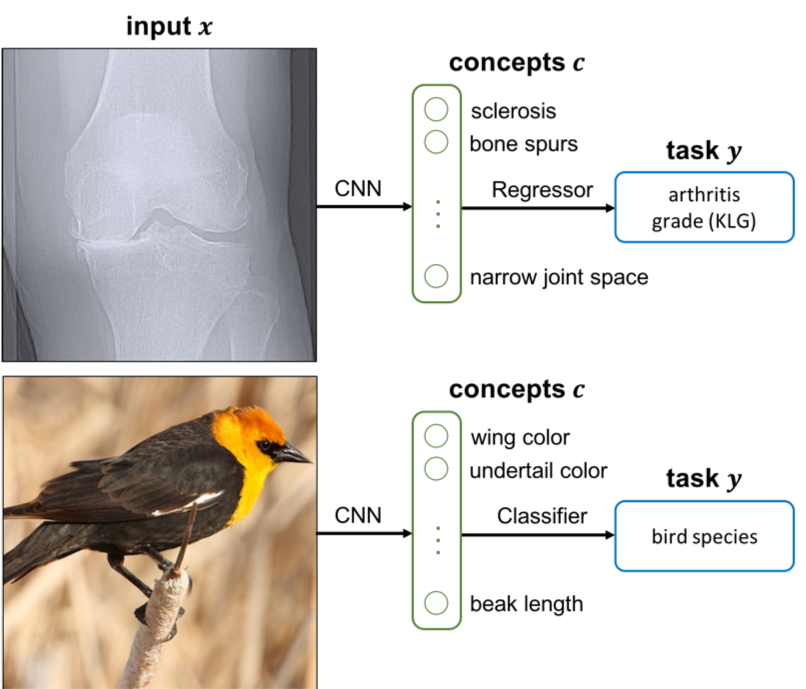}
    \caption{
      We study concept bottleneck models that first predict an intermediate set of human-specified concepts $c$,
      then use $c$ to predict the final output $y$. We illustrate the two applications we consider:
      knee x-ray grading and bird identification.
      \vspace{-5mm}
    }
    \vspace{-5mm}
    \label{fig:intro}
  \end{center}
\end{figure}

Earlier versions of concept bottleneck models were overtaken in predictive accuracy by end-to-end neural networks (e.g., \citet{kumar2009attribute} for face recognition and \citet{lampert2009learning} for animal identification),
leading to a perceived tradeoff between accuracy and interpretability in terms of concepts.
Recently, concept bottleneck models have started to re-emerge as targeted tools for solving particular tasks \citep{de2018clinically, yi2018neural, bucher2018semantic, losch2019interpretability, chen2020concept}.

In this paper, we propose a straightforward method for turning any end-to-end neural network into a concept bottleneck model, given concept annotations at training time: we simply resize one of the layers to match the number of concepts provided, and add an intermediate loss that encourages the neurons in that layer to align component-wise to the provided concepts.
We show that concept bottleneck models trained in this manner can achieve task accuracies competitive with or even higher than standard models.
We emphasize that concept annotations are not needed at test time; the model predicts the concepts, then uses the predicted concepts to make a final prediction.

Importantly---and unlike standard models---%
these bottleneck models allow us to intervene on concepts by editing the concept predictions $\hat{c}$ and propagating those changes to the target prediction $\hat{y}$.
Interventions enable richer human-model interaction: e.g., if the radiologist realizes that what the model thinks is a bone spur is actually an artifact, she can update the model's prediction by directly changing the corresponding value of $\hat{c}$.
When we simulate
this injection of human knowledge by partially correcting concept mistakes that the model makes at test time,
we find that accuracy improves substantially beyond that of a standard model.

Interventions also make concept bottleneck models interpretable in terms of high-level concepts: by manipulating concepts $\hat{c}$ and observing the model's response, we can obtain counterfactual explanations like ``if the model did not think the joint space was too narrow for this patient, then it would not have predicted severe arthritis''.
In contrast, prior work on explaining end-to-end models in terms of high-level concepts has been restricted to post-hoc interpretation of already-trained models: e.g., predicting concepts from hidden layers \citep{kim2018interpretability} or measuring the correlation of individual neurons with concepts \citep{bau2017network}.

The validity of interventions on a model depends on the alignment between its predicted concepts $\hat{c}$ and the true concepts $c$.
We can estimate this alignment by measuring the model's concept accuracy on a held-out validation set \citep{fong2017interpretable}.\footnote{%
  With the usual caveats of measuring accuracy: in practice, the validation set might be skewed such that models that learn spurious correlations can still achieve high concept accuracy.
}
A model with perfect concept accuracy across all possible inputs makes predictions $\hat{c}$ that align
with the true concepts $c$.
Conversely, if a model has low concept accuracy, then the model's predictions $\hat{c}$ need not match with the true concepts, and we would not expect interventions to lead to meaningful results.

\textbf{Contributions.}
We systematically study variants of concept bottleneck models and contrast them with standard end-to-end models
in different settings, with a focus on the previously-unexplored ability of concept bottleneck models to support concept interventions.
Our goal is to characterize concept bottleneck models more fully:
Is there a tradeoff between task accuracy and concept interpretability?
Do interventions at test time help model accuracy, and is concept accuracy a good indicator of the ability to effectively intervene?
Do different ways of training bottleneck models lead to significantly different outcomes in intervention?

We evaluate concept bottleneck models on the two applications in \reffig{intro}:
the osteoarthritis grading task \citep{nevitt2006osteoarthritis} and a fine-grained bird species identification task \citep{wah2011cub}.
On these, we show that bottleneck models are comparable to standard end-to-end models
while also attaining high concept accuracies.
In contrast, the concepts cannot be predicted with high accuracy
from linear combinations of neurons in a standard black-box model,
making it difficult to do post-hoc interpretation in terms of concepts like in \citet{kim2018interpretability}.
We demonstrate that we can substantially improve model accuracy by intervening on these bottleneck models at test time to correct model mistakes on concepts,
and we show that different methods of training bottleneck models lead to different trade-offs between task accuracy with and without interventions.
Finally, we show that bottleneck models guided to learn the right concepts can also be more robust to covariate shifts.

\section{Related work}\label{sec:related}

\textbf{Concept bottleneck models.} Models that bottleneck on human-specified
concepts---where the model first predicts the concepts, then uses only those predicted concepts to make a final prediction---%
have been previously used for specific applications \citep{kumar2009attribute, lampert2009learning}.
Early versions did not use end-to-end neural networks, which soon overtook them in predictive accuracy.
Consequently, bottleneck models have historically been more popular for few-shot learning settings, where shared concepts might allow generalization to unseen contexts,
rather than the standard supervised setting we consider here.

More recently, deep neural networks with concept bottlenecks have re-emerged as targeted tools for solving particular tasks, e.g., \citet{de2018clinically} for retinal disease diagnosis,  \citet{yi2018neural} for visual question-answering, and \citet{bucher2018semantic} for content-based image retrieval.
\citet{losch2019interpretability} and \citet{chen2020concept} also explore learning concept-based models via auxiliary datasets.

\textbf{Feature engineering.}
Constructing a concept bottleneck model is similar to traditional feature engineering \citep{lewis1992feature, zheng2018feature, nixon2019feature} in that both require specifying intermediate concepts/features.
However, they differ in an important way: in the former, we learn mappings from raw input to high-level concepts, whereas in the latter, we construct low-level features that can be computed from the raw input by handwritten functions.

\textbf{Concepts as auxiliary losses or features.}
Non-bottleneck models that use human-specified concepts commonly use them in auxiliary objectives in a multi-task setup, or as auxiliary features;
examples include using object parts \citep{huang2016part,zhou2018interpretable}, parse trees \citep{zelenko2003kernel,bunescu2005shortest},
or natural language explanations \citep{murty2020expbert}.
However, these models do not support intervention on concepts.
For instance, consider a multi-task model $c \leftarrow x \rightarrow y$, with the concepts $c$ used in an auxiliary loss; simply intervening on $\hat{c}$ at test time will not affect the model's prediction of $y$.
Interventions do affect models that use $c$ as auxiliary features
by first predicting $x \to c$ and then predicting $(x, c) \to y$ (e.g., \citet{sutton2005joint}),
but we cannot intervene in isolation on a single concept because of the side channel from $x \to y$.

\textbf{Causal models.}
We emphasize that we study interventions on the value of a predicted concept within the model, not on that concept in reality.
In other words, we are interested in how changing the model's predicted concept values $\hat{c}$ would affect its final prediction $\hat{y}$, and not in whether intervening on the true concept value $c$ in reality would actually affect the true label $y$.
This is similar to the notion of causality explored by other concept-based interpretability methods, e.g., in \citet{goyal2019explaining} and \citet{o2020generative}.

More broadly, many others have studied learning models of actual causal relationships in the world \citep{pearl2000causality}.
While concept bottleneck models can represent causal relationships between $x \to c \to y$ if the set of concepts $c$ is chosen appropriately, they have the advantage of being flexible and do not require $c$ to cause $y$.
For example, imagine that arthritis grade ($y$) is highly correlated with swelling ($c$). In this case, $c$ does not cause $y$ (hypothetically, if one could directly induce swelling in the patient, it would not affect whether they had osteoarthritis). However, concept bottleneck models can still exploit the fact that $c$ is highly predictive for $y$, and
intervening on the model by replacing the predicted concept value $\hat{c}$ with the true value $c$ can still improve accuracy, even if $c$ does not cause $y$.

\textbf{Post-hoc concept analysis.}
Many methods have been developed to interpret models post-hoc, including recent work on using human-specified concepts to generate explanations \citep{bau2017network,kim2018interpretability,zhou2018interpretable,ghorbani2019towards}.
These techniques rely on models automatically learning those concepts despite not having explicit knowledge of them, and can be particularly useful when paired with models that attempt to learn more interpretable representations \citep{bengio2013representation,chen2016infogan,higgins2017beta,melis2018towards}.
However, post-hoc methods can fail when the models do not learn these concepts, and also do not admit straightforward interventions on concepts.
In this work, we instead directly guide models to learn these concepts at training time.

\section{Setup}\label{sec:setup}

Consider predicting a target $y \in \R$ from input $x \in \R^d$;
for simplicity, we present regression first and discuss classification later.
We observe training points $\{(x^{(i)}, y^{(i)}, c^{(i)})\}_{i=1}^n$,
where $c \in \R^k$ is a vector of $k$ concepts.
We consider bottleneck models of the form $f(g(x))$,
where $g:\R^d \to \R^k$ maps an input $x$ into the concept space (``bone spurs'', etc.),
and $f:\R^k \to \R$ maps concepts into a final prediction (``arthritis severity'').
We call these \emph{concept bottleneck models} because
their prediction $\hat{y} = f(g(x))$ relies on the input $x$ entirely through the bottleneck $\hat{c} = g(x)$,
which we train to align component-wise to the concepts $c$.
We define \emph{task accuracy} as how accurately $f(g(x))$ predicts $y$,
and \emph{concept accuracy} as how accurately $g(x)$ predicts $c$ (averaged over each concept).
We will refer to $g(\cdot)$ as predicting $x \to c$, and to $f(\cdot)$ as predicting $c \to y$.

In our work, we systematically study different ways of learning concept bottleneck models.
Let $L_{C_j}: \R \times \R \to \R_+$ be a loss function that measures the discrepancy between the predicted and true $j$-th concept,
and let $L_Y: \R \times \R \to \R_+$ measure the discrepancy between predicted and true targets.
We consider the following ways to learn a concept bottleneck model $(\hat{f}, \hat{g})$:
\begin{enumerate}
  \item The \emph{independent bottleneck} learns $\hat{f}$ and $\hat{g}$ independently:
  $\hat{f} = \argmin_{f} \sum_i L_Y(f(c^{(i)}); y^{(i)})$,
  and $\hat{g} = \argmin_{g} \sum_{i,j} L_{C_j}(g_j(x^{(i)}); c_j^{(i)})$.
  While $\hat{f}$ is trained using the true $c$, at test time it still takes $\hat{g}(x)$ as input.

  \item The \emph{sequential bottleneck} first learns $\hat{g}$ in the same way as above.
  It then uses the concept predictions $\hat{g}(x)$ to learn
  $\hat{f} = \argmin_{f} \sum_i L_Y(f(\hat{g}(x^{(i)})); y^{(i)})$.

  \item The \emph{joint bottleneck} minimizes the weighted sum
  $\hat{f}, \hat{g} = \argmin_{f,g} \sum_i \bigl[ L_Y(f(g(x^{(i)})); y^{(i)}) + \sum_j \lambda L_{C_j}(g(x^{(i)}); c^{(i)}) \bigr]$
  for some $\lambda > 0$.
\end{enumerate}

As a control, we also study the \emph{standard model}, which ignores concepts and directly minimizes $\hat{f}, \hat{g} = \argmin_{f,g} \sum_i L_Y(f(g(x^{(i)})); y^{(i)})$.

The hyperparameter $\lambda$ in the joint bottleneck controls the tradeoff between concept vs. task loss.
The standard model is equivalent to taking $\lambda \to 0$, while the sequential bottleneck can be viewed as taking $\lambda \to \infty$.
Compared to independent bottlenecks,
sequential bottlenecks allow the $c \to y$ part of the model to adapt to how well it can predict $x \to c$;
and joint bottlenecks further allow the model's version of the concepts to be refined to improve predictive performance.

We propose a simple scheme to turn an end-to-end neural network into a concept bottleneck model:
simply resize one of its layers to have $k$ neurons to match the number of concepts $k$, then choose one of the training schemes above.

\textbf{Classification.}
In classification, $f$ and $g$ compute real-valued scores (e.g., concept logits $\hat{\ell} = \hat{g}(x) \in \R^k$)
that we then turn into a probabilistic prediction
(e.g., $P(\hat{c}_j = 1) = \sigma(\hat{\ell}_j)$ for logistic regression).
This does not change the independent bottleneck,
since $f$ is directly trained on the binary-valued $c$.
For the sequential and joint bottlenecks, we connect $f$ to the logits $\hat{\ell}$,
i.e., we compute $P(\hat{c}_j = 1) = \sigma(\hat{g}_j(x))$ and $P(\hat{y} = 1) = \sigma(\hat{f}(\hat{g}(x)))$.

\begin{table}[!t]
\caption{Task errors with $\pm$2SD over random seeds.
  Overall, independent, sequential, and joint concept bottleneck models are comparable to standard end-to-end models on task error. Removing the bottleneck from the standard model (``no bottleneck'') does not significantly affect task error. Multi-task learning on the standard models further improves task error, but does not allow for interventions.
  }
\label{tab:target_acc}
\begin{center}
\begin{small}
\begin{sc}
\begin{tabular}{lcc}
\toprule
Model & $y$ RMSE (OAI) & $y$ Error (CUB)\\
\midrule
Independent & 0.435$\pm$ 0.024& 0.240$\pm$0.012  \\
Sequential  & 0.418$\pm$ 0.004& 0.243$\pm$0.006  \\
Joint       & 0.418$\pm$ 0.004& 0.199$\pm$0.006  \\
\midrule
Standard    & 0.441$\pm$ 0.006& 0.175$\pm$0.008 \\
\ \ no bottleneck   & 0.443 $\pm$ 0.008 & 0.173$\pm$0.003 \\
Multitask   & 0.425$\pm$ 0.010& 0.162$\pm$0.002 \\
\bottomrule
\end{tabular}
\end{sc}
\end{small}
\end{center}
\vspace{-5mm}
\end{table}

\section{Benchmarking bottleneck model accuracy}\label{sec:experiments}

We start by showing that concept bottleneck models achieve both competitive task accuracy and high concept accuracy.
While this is necessary for bottleneck models to be viable in practice,
their strength is that we can interpret and intervene on them;
we explore those aspects in Sections~\ref{sec:posthoc} and \ref{sec:tti}.

\subsection{Applications}\label{sec:experiments_applications}
We consider an x-ray grading and a bird identification task.
Their corresponding datasets are annotated with high-level concepts that practitioners (radiologists/birders) use to reason about their decisions.
(Dataset details in \refapp{supp_dataset}.)

\textbf{X-ray grading (OAI).}
We use knee x-rays from the Osteoarthritis Initiative (OAI) \citep{nevitt2006osteoarthritis},
which compiles radiological and clinical data on patients at risk of knee osteoarthritis (\reffig{intro}-Top; $n=36,369$ data points).
Given an x-ray, the task is to predict the Kellgren-Lawrence grade (KLG),
a 4-level ordinal variable assessed by radiologists that measures the severity of osteoarthritis,
with higher scores denoting more severe disease.\footnote{%
  Due to technicalities in the data collection protocol, we use a modified version of KLG
  where the first two grades are combined.
}
As concepts, we use $k=10$ ordinal variables describing joint space narrowing, bone spurs, calcification, etc.;
these clinical concepts are also assessed by radiologists and
used directly in the assessment of KLG \citep{kellgren1957radiological}.

\textbf{Bird identification (CUB).}
We use the Caltech-UCSD Birds-200-2011 (CUB) dataset \citep{wah2011cub}, which comprises $n = 11,788$ bird photographs (\reffig{intro}-Bot). The task is to classify the correct bird species out of 200 possible options. As concepts, we use $k = 112$ binary bird attributes representing wing color, beak shape, etc.
Because the provided concepts are noisy (see \refapp{supp_dataset}), we denoise them by majority voting, e.g., if more than $50\%$ of crows have black wings in the data, then we set all crows to have black wings.
In other words, we use class-level concepts and assume that all birds of the same species in the training data share the same concept annotations.
In contrast, the OAI dataset uses instance-level concepts: examples with the same $y$ can have different concept annotations $c$.

\textbf{Models.}
For each task, we construct concept bottleneck models by
adopting model architectures and hyperparameters from previous high-performing approaches;
see \refapp{supp_experiments} for experimental details.
For the joint bottleneck model, we search over the task-concept tradeoff hyperparameter $\lambda$ and report results
for the model that has the highest task accuracy while maintaining high concept accuracy on the validation set
($\lambda=1$ for OAI and $\lambda=0.01$ for CUB).
We model x-ray grading as a regression problem (minimizing mean squared error) on both the KLG target $y$ and concepts $c$,
following \citet{pierson2019using};
we learn $g$, which goes from $x \to c$, by fine-tuning a pretrained ResNet-18 model \citep{he2016resnet},
and we learn $f$, which goes from $c \to y$, by training a small 3-layer multi-layer perceptron.
We model bird identification as multi-class classification for the species $y$ and binary classification for the concepts $c$.
Following \citet{cui2018large}, we learn $g$ by fine-tuning an Inception-v3 network \citep{szegedy2016rethinking}, and learn $f$ by training a single linear layer (i.e., logistic regression).

\subsection{Task and concept accuracies}

\begin{figure*}[!th]
  \begin{center}
    \includegraphics[width=\textwidth]{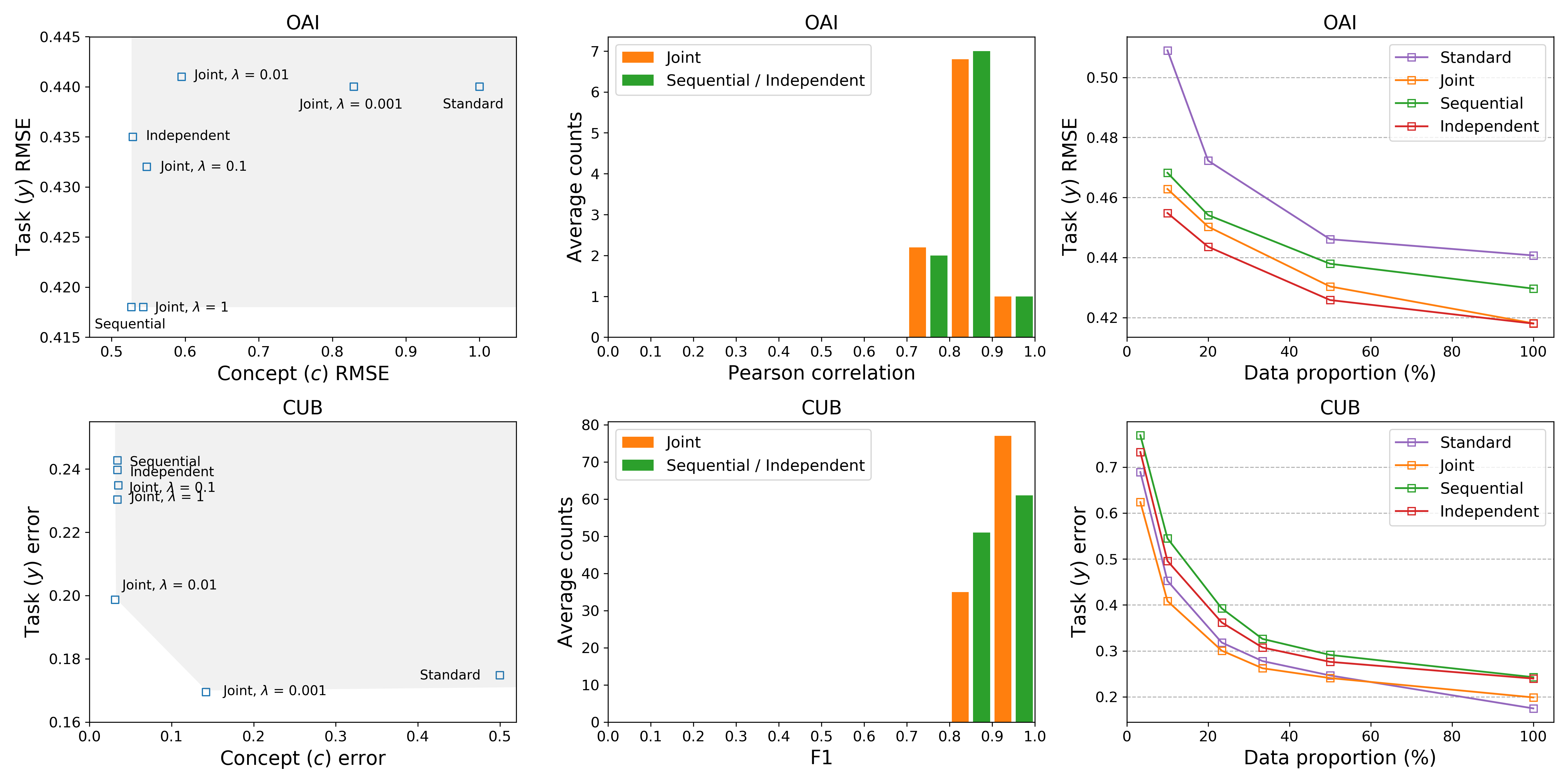}
    \vspace{-10mm}
    \caption{
      \textbf{Left}: The shaded regions show the optimal frontier between task vs. concept error.
      On OAI, we find little trade-off; models can do well on both task and concept prediction. On CUB, there is some trade-off, with standard models and joint models that prioritize task prediction (i.e., with sufficiently low $\lambda$) having lower task error.
      For standard models, we plot the concept error of the mean predictor (OAI) or random predictor (CUB).
      \textbf{Mid}: Histograms of how accurate individual concepts are, averaged over multiple random seeds.
      In our tasks, each individual concept can be accurately predicted by bottleneck models.
      \textbf{Right}: Data efficiency curves. Especially on OAI, bottleneck models can achieve the same task accuracy as standard models with many fewer training points.
    }
    \label{fig:experiments_main}
  \end{center}
\end{figure*}

\begin{table}[!t]
\caption{Average concept errors.
Bottleneck models have lower error than linear probes on standard and SENN models.}
\label{tab:concept_acc}
\begin{center}
\begin{small}
\begin{sc}
\begin{tabular}{lcccc}
\toprule
Model & $c$ RMSE (OAI) & $c$ Error (CUB) \\
\midrule
Independent & 0.529$\pm$0.004 & 0.034$\pm$0.002\\
Sequential  & 0.527$\pm$0.004 & 0.034$\pm$0.002\\
Joint       & 0.543$\pm$0.014 & 0.031$\pm$0.000\\
\midrule
Standard [probe] & 0.680$\pm$0.038 & 0.093$\pm$0.004\\
SENN [probe]     & 0.676$\pm$0.026 & - \\
\bottomrule
\end{tabular}
\end{sc}
\end{small}
\end{center}
\vspace{-5mm}
\end{table}

\reftab{target_acc} shows that concept bottleneck models achieve comparable task accuracy to standard black-box models on both tasks,
despite the bottleneck constraint (all numbers reported are on a held-out test set).
On OAI, joint and sequential bottlenecks are actually better in root mean square error (RMSE) than the standard model,\footnote{%
  To contextualize RMSE, our modified KLG ranges from 0-3, and average Pearson correlations between each predicted and true concept are ${\geq}0.87$ for all bottleneck models.
}
and on CUB, sequential and independent bottlenecks are worse in 0-1 error than the standard model, though the joint model (which is allowed to modify the concepts to improve task performance) closes most of the gap.

At the same time, the bottleneck models are able to accurately predict each concept well (\reffig{experiments_main}),
and they achieve low average error across all concepts (\reftab{concept_acc}).
As discussed in \refsec{intro}, low concept error suggests that the model's concepts are aligned with the true concepts, which in turn suggests that we might intervene effectively on them; we will explore this in \refsec{tti}.

Overall, we do not observe a tradeoff between high task accuracy and high concept accuracy:
pulling the bottleneck layer towards the concepts $c$ does not substantially affect the model's ability to predict $y$ in our tasks, even when the bottleneck is trained jointly.
We illustrate this in \reffig{experiments_main}-Left, which plots the task vs. concept errors of each model.

\textbf{Additional baselines.}
We ran two further baselines to determine if the bottleneck architecture impacted model performance.
First, standard models in the literature generally do not use architectures that have a bottleneck layer with exactly $k$ units,
so we trained a variant of the standard model without that bottleneck layer
(directly using a ResNet-18 or Inception-v3 model to predict $x \to y$);
this performed similarly to the standard bottleneck model (``Standard, no bottleneck'' in \reftab{target_acc}).
Second, we tested a typical multi-task setup using an auxiliary loss to encourage the activations of the last layer to be predictive of the concepts $c$,
hyperparameter searching across different weightings of this auxiliary loss.
These models also performed comparably (``Multitask'' in \reftab{target_acc}), but since they do not
support concept interventions,
we focus on comparing standard vs. concept bottleneck models in the rest of the paper.

\textbf{Data efficiency.}
Another way to benchmark different models is by measuring data efficiency, i.e., how many training points they need for a desired level of accuracy.
To study this, we subsampled the training and validation data and retrained each model (details in \refapp{supp_data_efficiency}).
Concept bottleneck models are particularly effective on OAI: the sequential bottleneck model with $\approx 25\%$ of the full dataset performs similarly to the standard model.
On CUB, the joint bottleneck and standard models are more accurate throughout, with the joint model slightly more accurate in lower data regimes (\reffig{experiments_main}-Right).

\section{Benchmarking post-hoc concept analysis}\label{sec:posthoc}
Concept bottleneck models are trained to have a bottleneck layer that aligns component-wise with
the human-specified concepts $c$.
For any test input $x$, we can read out predicted concepts directly from the bottleneck layer,
as well as intervene on concepts by manipulating the predicted concepts $\hat{c}$
and inspecting how the final prediction $\hat{y}$ changes.
This enables explanations like ``if the model did not think the joint
space was too narrow for this patient, then it would not have
predicted severe arthritis''.
An alternative approach to interpreting models in terms of concepts is
post-hoc analysis: take an existing model trained to directly predict $x \to y$ without any concepts,
and use a probe to recover the known concepts from the model's activations.
For example, \citet{bau2017network} measure the correlation of individual neurons with concepts,
while \citet{kim2018interpretability} use a linear probe to predict concepts with linear combinations of neurons.

A necessary condition for post-hoc interpretation is high concept accuracy.
In this section, we therefore evaluate how accurately linear probes can predict concepts post-hoc.
We emphasize that this is a necessary but not sufficient condition for accurate post-hoc interpretations, as post-hoc analysis does not enable interventions on concepts:
even if we find a linear combination of neurons that predicts a concept well,
it is unclear how to modify the model's activations to change what it thinks of that concept alone.
Without this ability to intervene, interpretations in terms of concepts are suggestive but fraught:
even if we can say that ``the model thinks the joint space is narrow'',
it is hard to test if that actually affects its final prediction.
This is an important limitation of post-hoc interpretation, though we will set it aside for this section.

Following \citet{kim2018interpretability}, we trained a linear probe to predict each concept
from the layers of the standard model (see \refapp{supp_experiments}).
We found that these linear probes have lower concept accuracy compared to simply reading concepts out from a bottleneck model
(\reftab{concept_acc}).
On OAI, the best-performing linear probe achieved an average concept RMSE of $0.68$, vs.
$0.53$ in the bottleneck models; average Pearson correlation dropped to $0.72$ from $0.84$.
On CUB, the linear probe achieved an average concept error of $0.09$ instead of $0.03$;
average F1 score dropped to $0.77$ from $0.92$.

We also tested if we could predict concepts post-hoc from models
designed to learn an interpretable mapping from $x \to y$.
Specifically, we evaluated self-explaining neural networks (SENN) \citep{melis2018towards}.
As with standard models, SENN does not use any pre-specified concepts;
it learns an input representation encouraged to be interpretable through diversity and smoothness constraints.
However, linear probes on SENN also had lower concept accuracy on OAI
($0.68$ concept RMSE; see \refapp{supp_experiments}).\footnote{%
We were unable to run SENN on CUB because the default implementation was too memory-intensive; CUB has many more classes/concepts than the tasks SENN was originally used for.}

The comparative difficulty in predicting concepts post-hoc
suggests that if we have prior knowledge of what concepts practitioners would use,
then it helps to directly train models with these concepts
instead of hoping to recover them from a model trained without knowledge of these concepts.
See \citet{chen2020concept} for a related discussion.

\section{Test-time intervention}\label{sec:tti}

The ability to intervene on concept bottleneck models enables human users to have richer interactions with them.
For example, if a radiologist disagrees with a model's prediction,
she would not only be able to inspect the predicted concepts,
but also simulate how the model would respond to changes in those predicted concepts.
This kind of \emph{test-time intervention} can be particularly useful in high-stakes settings like medicine,
or in other settings where it is easier for users to identify the concepts $c$ (e.g., wing color) than the target $y$ (exact species of bird).

We envision that in practice, domain experts interacting with the model could intervene to ``fix'' potentially incorrect concept values predicted by the model.
To study this setting, we use an oracle that can query the true value of any concept for a test input.
\reffig{tti_qual} shows examples of interventions that lead to the model making a correct prediction.

\begin{figure*}[!th]
\begin{center}
\includegraphics[width=\textwidth]{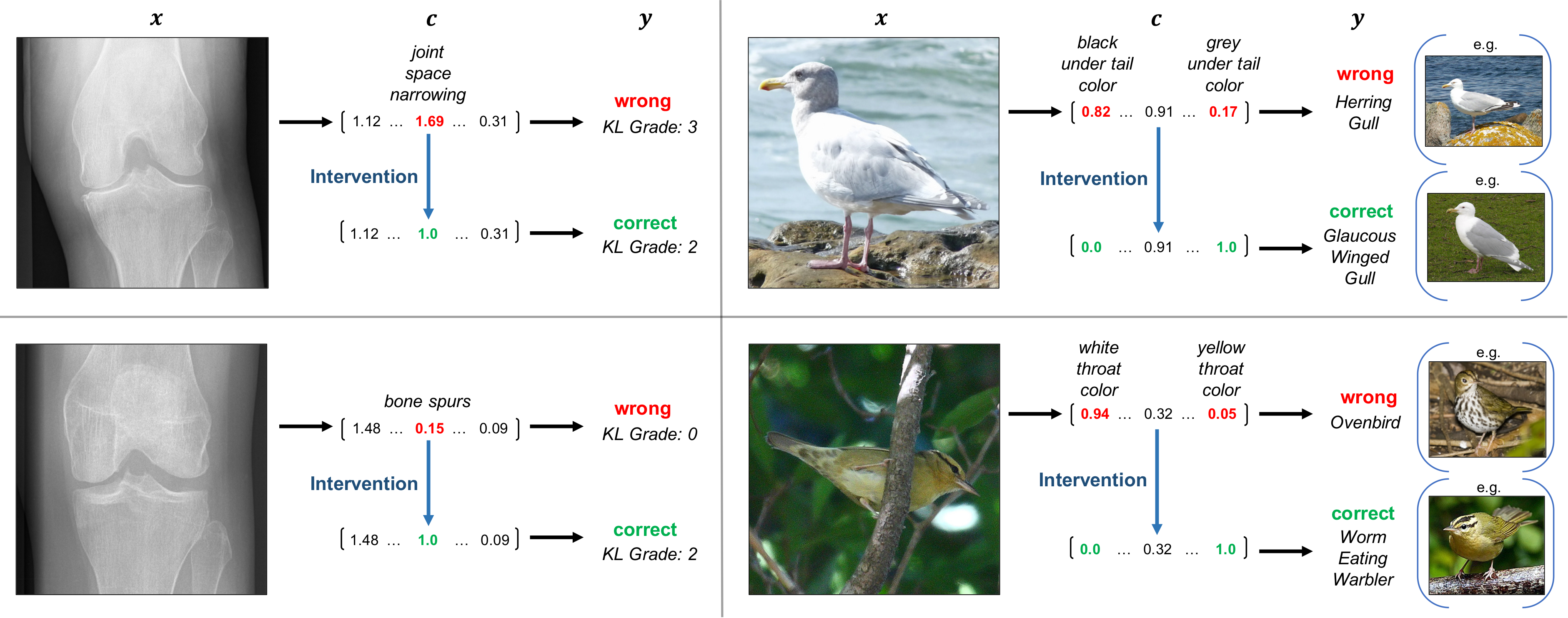}
\vspace{-12mm}
\caption{Successful examples of test-time intervention, where intervening on a single concept corrects the model prediction. Here, we show examples from independent bottleneck models. \textbf{Right}: For CUB, we intervene on concept groups instead of individual binary concepts. The sample birds on the right illustrate how the intervened concept distinguishes between the original and new predictions.}
\label{fig:tti_qual}
\end{center}
\end{figure*}

\begin{figure*}[!th]
\begin{center}
\includegraphics[width=\textwidth]{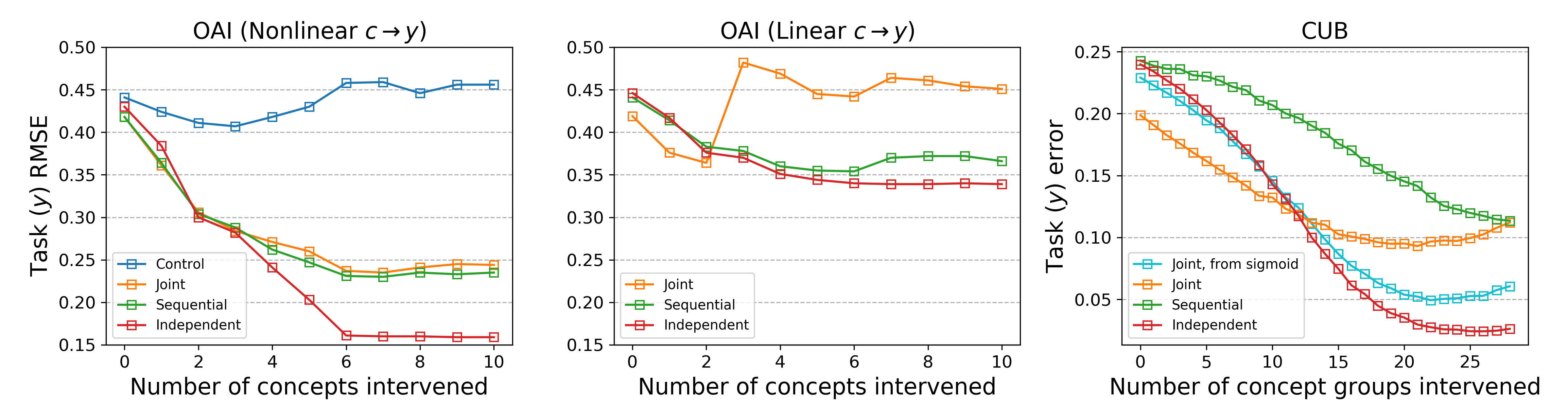}
\vspace{-10mm}
\caption{
  Test-time intervention results.
  \textbf{Left}: Intervention substantially improves task accuracy, except for the control model, which is a joint model that heavily prioritizes label accuracy over concept accuracy.
  \textbf{Mid}: Replacing $c \to y$ with a linear model degrades effectiveness.
  \textbf{Right}: Intervention improves task accuracy except for the joint model. Connecting $c \to y$ to probabilities rescues intervention but degrades normal accuracy.
  }
\label{fig:tti_quant}
\end{center}
\end{figure*}

\subsection{Intervening on OAI}
Recall that concept bottleneck models first predict concept values $\hat{c}$ from the input $x$, and then use those predicted concept values to predict the target $\hat{y}$. On OAI, we define intervening on the $j$-th concept as replacing $\hat{c}_j$ with its true value $c_j$, as provided by the oracle, and then updating the prediction $\hat{y}$ after this replacement. Similarly, we can intervene on multiple concepts by simultaneously replacing all of their corresponding predicted concept values, and then updating the prediction. (Intervening on CUB, which we describe in the next subsection, is similar but slightly more complicated because of its regression setting.)

Concretely, we iteratively select concepts on which to intervene on, using an  input-independent ordering over concepts computed from the held-out validation set. This means that we always intervene on the same concept $c_{i_1}$ first, followed by intervening on both $c_{i_1}$ and $c_{i_2}$, and so on (see \refapp{supp_experiments} for more detail).

We found that test-time intervention in this manner significantly improved task accuracy on OAI:
e.g., querying for just 2 concepts reduces task RMSE from ${>}0.4$ to ${\approx}0.3$ (\reffig{tti_quant}-Left).
These results hint that a single radiologist collaborating with bottleneck models
might be able to outperform either the radiologist or model alone,
since the concept values used for intervention mostly come from a single radiologist instead of a consensus reading (see \refapp{supp_dataset}),
and neural networks similar to ours are comparable with individual radiologist performance in terms of agreement with the consensus grade \citep{tiulpin2018automatic, pierson2019using}.
However, definitively showing this would require more careful human studies.

Furthermore, we found a trade-off between intervenability and task accuracy:
the independent bottleneck achieved better test error when all $k=10$ concepts are replaced than the sequential or joint bottlenecks, but performed slightly worse without any intervention (\reffig{tti_quant}-Left).
This behavior is consistent with how these different bottleneck models are trained.
Recall that in the independent bottleneck, $c \to y$ is trained using the true $c$, which is what we replace the predicted concepts $\hat{c}$ with.
In contrast, in the sequential and joint models, $c \to y$ is trained using the predicted $\hat{c}$,
which in general will have a different distribution from the true $c$.
Without any interventions, we might therefore expect the independent bottleneck to perform worse, as at test time, it receives the distribution over the predicted $\hat{c}$ instead of the distribution of the true $c$ that it was trained with. However, when all concepts are replaced, the reverse is true.

To better understand what influences intervention effectiveness,
we ran two ablations.
First, we found that intervention can fail in joint models when
we prioritize fitting $y$ over $c$ too much (i.e., when $\lambda$ is too small).
Specifically, the joint model with $\lambda = 0.01$ learned a concept representation that was not as well-aligned with the true concepts, and replacing $\hat{c}$ with the true $c$ at test time slightly \emph{increased} test error (``control'' model in \reffig{tti_quant}-Left).
Second, we changed the $c \to y$ model
from the 3-layer multi-layer perceptron used throughout the paper to a single linear layer.
Surprisingly, test-time intervention was less effective here compared to the non-linear counterparts (\reffig{tti_quant}-Mid),
even though task and concept accuracies were similar before intervention (concept RMSEs of the sequential and independent models are not even affected by the change in $c \to y$).
It is unclear to us why a linear $c \to y$ model should be worse at handling interventions, and this observation warrants further investigation in future work.

Altogether, these results suggest that task and concept accuracies alone are insufficient for determining how effective test-time intervention will be on a model.
Different inductive biases in different models control how effectively they can handle distribution shifts from $\hat{c} \to y$ (pre-intervention) to $c \to y$ (post-intervention).
Even without this distribution shift, as in the case of the linear vs. non-linear independent bottlenecks, the expressivity of $c \to y$ has a large effect on intervention effectiveness.
Moreover, it is possible that the average concept accuracy masks differences in individual concept accuracies that influence these results.

\subsection{Intervening on CUB}

Intervention on CUB is complicated by the fact that it is classification instead of regression.
Recall from \refsec{setup} that for sequential and joint bottleneck classifiers,
the final predictor $f$ takes in the predicted concept logits $\hat{\ell} = \hat{g}(x)$ instead of the predicted binary concepts $\hat{c}$.
To intervene on a concept $\hat{c}_j$, we therefore cannot directly copy over the true $c_j$.
Instead, we need to alter the logits $\hat{\ell}_j$ such that $P(\hat{c}_j = 1) = \sigma(\hat{\ell}_j)$ is close to the true $c_j$.
Concretely, we intervene on $\hat{c}_j$ by setting $\hat{\ell}_j$ to the 5th (if $c_j = 0$) or 95th (if $c_j = 1$) percentile of $\hat{\ell}_j$ over the training distribution.

Another difference is that for CUB, we group related concepts and intervene on them together. This is because many of the concepts encode the same underlying property, e.g.,
$c_1 = 1$ if the wing is red, $c_2 = 1$ if the wing is black, etc.
We assume that the human (oracle) returns the true wing color in a single query,
instead of only answering yes/no questions about the wing color; see \reffig{tti_quant}-Right.

An important caveat is that we use denoised class-level concepts in the CUB dataset (\refsec{experiments_applications}).
To avoid unrealistic scenarios where a bird part is not visible in the image but we still `intervene' on it, we only replace a concept value with the true concept value if that concept is actually visible in the image (visibility information is included in the dataset).
The results here are nonetheless still optimistic, because they assume that human experts do not make mistakes in identifying concepts and that birds of the same species always share the same concept values.

Test-time intervention substantially improved accuracy on CUB bottleneck models (\reffig{tti_quant}-Right),
though it took intervention on several concept groups to see a large gain.
For simplicity, we queried concept groups in random order,
which means that many queries were probably irrelevant for any given test example.\footnote{This differs from OAI, which used a fixed ordering computed on a held-out validation set. For CUB, it is common to retrain the model on the training + validation set after the hyperparameter search, which makes it difficult to subsequently use the held-out validation set. See \refapp{supp_experiments} for more details.}

As with OAI, test-time intervention was more effective on independent bottleneck models than on the sequential and joint models (\reffig{tti_quant}-Right).
We hypothesize that this is partially due to the ad-hoc fashion in which we set logits to the 5th or 95th percentiles for the latter models.
To study this, we trained a joint bottleneck with the same task-concept tradeoff $\lambda$ but with the final predictor $f$ connected to the probabilities $P(\hat{c_j} = 1) = \sigma(\hat{\ell}_j)$ instead of the logits $\hat{\ell}_j$.
We show the performance of this model under test-time intervention as the ``Joint, from sigmoid'' curve in \reffig{tti_quant}-Right.
It had a higher task error of $0.224$ vs. $0.199$ with the normal joint model;
we suspect that the squashing from the sigmoid makes optimization harder.
However, test-time intervention worked better, and it is more straightforward as we can directly edit $\hat{c}$ instead of having to arbitrarily choose a percentile for the logits $\hat{\ell}$.
This raises the question of how to effectively intervene in the classification setting while
maintaining the computational advantages of avoiding the sigmoid in the $c \to y$ connection.

\section{Robustness to background shifts}\label{sec:robustness}
Finally, we investigate if concept bottleneck models
can be more robust than standard models to spurious correlations that hold in the training distribution but not the test distribution.
For example, in the bird recognition task, a model might spuriously associate the image background with the label and therefore make more errors if the relationship between the background and label changes.
However, if the relationship between the concepts and the label remains invariant, then one might hope that concept bottleneck models might still perform well under this change.

\begin{figure}[!t]
\begin{center}
\includegraphics[width=\textwidth]{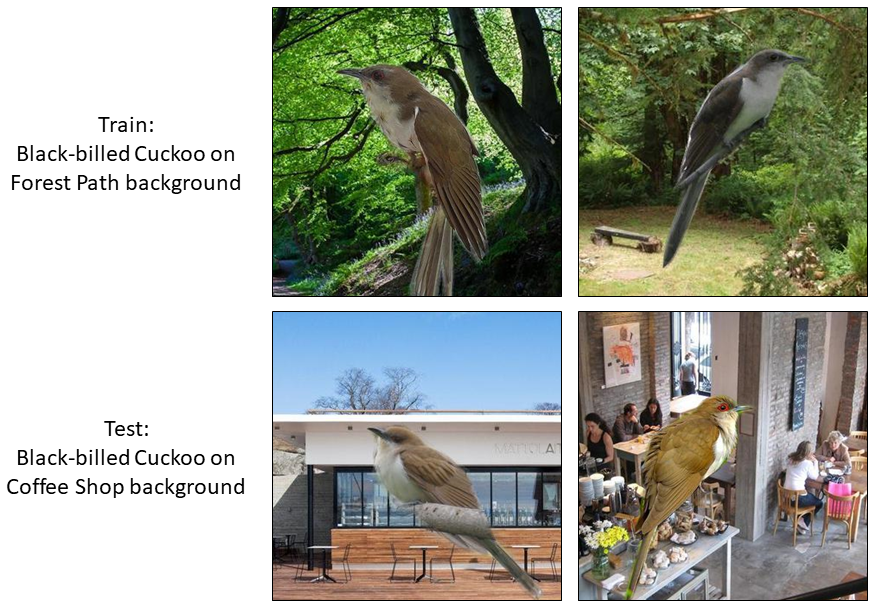}
\vspace{-5mm}
\caption{In the TravelingBirds dataset, we change the image backgrounds associated with each class from train to test time
(illustrated above for a single class).
}
\vspace{-5mm}
\label{fig:adversarial_data}
\end{center}
\end{figure}

\begin{table}[!t]
\caption{Task and concept error with background shifts.
Bottleneck models have substantially lower task error than the standard model.}
\label{tab:robust_results}
\begin{center}
\begin{small}
\begin{sc}
\begin{tabular}{lcc}
\toprule
Model & $y$ Error & $c$ Error\\
\midrule
Standard    & 0.627$\pm$0.013 & - \\
Joint       & 0.482$\pm$0.018 & 0.069$\pm$0.002 \\
Sequential  & 0.496$\pm$0.009 & 0.072$\pm$0.002 \\
Independent & 0.482$\pm$0.008 & 0.072$\pm$0.002 \\
\bottomrule
\end{tabular}
\end{sc}
\end{small}
\end{center}
\vspace{-5mm}
\end{table}

To test this, we constructed the TravelingBirds dataset, a variant of the CUB dataset where the target $y$ is spuriously correlated with image background in the training set.
Specifically, we cropped each bird out of its original background (using segmentation masks from the original dataset) and onto a new background from the Places dataset \citep{zhou2017places},
with each bird class (species) assigned to a unique and randomly-selected category of places. At test time, we shuffle this mapping, so each class is associated with a different category of places.
For example, at training time, all robins might be pictured against the sky, but at test time they might all be on grassy plains (\reffig{adversarial_data}).\footnote{TravelingBirds is constructed in a similar manner to the Waterbirds dataset introduced in \citet{sagawa2020group}. In Waterbirds, which was designed to benchmark methods for handling group shifts, the classes are collapsed into just two classes (waterbirds and landbirds), and only land or water backgrounds are used. In TravelingBirds, which is a more adversarial setting, we retain the multi-class labels, use a larger set of backgrounds, and change the test distribution so that the relationship between the backgrounds and labels are completely altered.}

The results on TravelingBirds are shown in \reftab{robust_results}.
Concept bottleneck models do better than standard models: they rely less on background features, since each concept is shared among multiple bird classes and thus appears in training data points that span multiple background types, reducing the correlation between the concept and the background. On the other hand, standard models that go straight from the input $x$ to the label $y$ leverage the spurious correlation between background and label, and consequently fail on the shifted test set.

This toy experiment shows that concept bottleneck models can be more robust to spurious correlations when the target $y$ is more correlated with training data artifacts compared to the concepts $c$.
We emphasize that whether bottleneck models are more robust depends on the choice of the set of concepts $c$ and the shifts considered; a priori, we do not expect that an arbitrary set of concepts $c$ will lead to a more robust model.

\section{Discussion}\label{sec:discussion}
Concept bottleneck models can compete on task accuracy
while supporting intervention and interpretation,
allowing practitioners to reason about these models in terms of high-level concepts they are familiar with,
and enabling more effective human-model collaboration through test-time intervention.
We believe that these models can be promising in settings like medicine,
where the high stakes incentivize human experts to collaborate with models at test time,
and where the tasks are often normatively defined with respect to a set of standard concepts
(e.g., ``osteoarthritis is marked by the presence of bone spurs'').
A flurry of recent papers have used similar human concepts for post-hoc interpretation of medical and other scientific ML models, e.g., \citet{graziani2018regression} for breast cancer histopathology; \citet{clough2019global} for cardiac MRIs; and \citet{sprague2019interpretable} for meteorology (storm prediction).
We expect that concept bottleneck models can be applied directly to similar settings.

A drawback of concept bottleneck models is that they require annotated concepts at training time.
However, if the set of concepts are good enough,
then fewer training examples might be required to achieve a desired accuracy level (as in OAI).
This allows model developers to trade off the cost of acquiring more detailed annotations against
the cost of acquiring new training examples, which can be helpful when new training examples are expensive to acquire,
e.g., in medical settings where adding training points might entail invasive/expensive procedures on patients,
but the incremental cost in asking a doctor to add annotations to data points that they already need to look at might be lower.

Below, we discuss several directions for future work.

\textbf{Learning concepts.} In tasks that are not normatively defined, we can learn the right concepts by interactively querying humans.
For example, \citet{cheng2015flock} asked crowdworkers to generate concepts to differentiate between adaptively-chosen pairs of examples,
and used those concepts to train models to recognize the artist of a painting, tell honest from deceptive reviews, and identify popular jokes.
Similar methods can also be used to refine existing concepts and make them more discriminative \citep{duan2012discovering}.

\textbf{Side channel from $x \to y$.}
We can also account for having an incomplete set of concepts by adding a direct side channel from $x \to y$ to a bottleneck model.
This is equivalent to using the concepts as auxiliary features,
and as discussed in \refsec{related},
has the drawback that we cannot cleanly intervene on a single concept,
since the $x \to y$ connection might also be implicitly reasoning about that concept.
Devising approaches to mitigate this issue would allow concept models to have high task accuracy even with an incomplete set of concepts; for example, one might consider carefully regularizing the $x \to y$ connection or using some sort of adversarial loss to prevent it from using existing concepts.

\textbf{Theoretically analyzing concept bottlenecks.}
In OAI, we found that concept bottleneck models outperform standard models on task accuracy. Better understanding when and why this happens can inform how we collect concepts or design the architecture of bottleneck models.
As an example of what this could entail, we sketch an analysis of a simple well-specified linear regression setting,
where we assume that the input $x \in \R^d$ is normally distributed, and that the concepts $c \in \R^k$ and the target $y \in \R$ are noisy linear transformations of $x$ and $c$ respectively.
Our analysis suggests that in this setting, concept bottleneck models can be particularly effective when the number of concepts $k$ is much smaller than the input dimension $d$ and when the concepts have relatively low noise compared to the target.

Concretely, we compared an independent bottleneck model (two linear regression problems for $x \to c$ and $c \to y$) to a standard model (a single linear regression problem) by deriving the ratio of their excess mean-squared-errors as the number of training points $n$ goes to infinity:
\begin{align*}
    \frac{\text{Excess error for indp bottleneck model}}{\text{Excess error for standard model}} \leq \frac{\frac{k}{d} \sigma_Y^2 + \sigma_C^2}{\sigma_Y^2 + \sigma_C^2},
\end{align*}
where $\sigma_C^2$ and $\sigma_Y^2$ are the variances of the noise in the concepts $c$ and target $y$, respectively. See \refapp{supp_proofs} for a formal statement and proof.
The asymptotic relative excess error is small when $\frac{k}{d}$ is small and $\sigma_Y^2 \gg \sigma_C^2$, i.e., when the number of concepts is much smaller than the input dimension and the concepts are less noisy than the target.

\textbf{Intervention effectiveness.}
Our exploration of the design space of concept bottleneck models
showed that the training method (independent, sequential, joint) and choice of architecture influence not just the task and concept accuracies, but also how effective interventions are.
This poses several open questions, for example:
What factors drive the effectiveness of test-time interventions?
Could adaptive strategies that query for the concepts that maximize expected information gain on a particular test example make interventions more effective?
Finally, how might we have models learn from interventions to avoid making similar mistakes in the future?

\newpage
\section*{Reproducibility}
The code for replicating our experiments is available on GitHub at \url{https://github.com/yewsiang/ConceptBottleneck}. An executable version of the CUB experiments in this paper is on CodaLab at \url{https://worksheets.codalab.org/worksheets/0x362911581fcd4e048ddfd84f47203fd2}. The TravelingBirds dataset can also be downloaded at that link. While we are unable to release the OAI dataset publicly, an application to access the data can be made at \url{https://nda.nih.gov/oai/}.

\clearpage
\section*{Acknowledgements}
We are grateful to Jesse Mu, Justin Cheng, Kensen Shi, Michael Bernstein, Rui Shu, Sendhil Mullainathan, Shyamal Buch, Yair Carmon, Ziad Obermeyer, and our anonymous reviewers for helpful advice. PWK was supported by a Facebook PhD Fellowship. YST was supposed by an IMDA Singapore Digital Scholarship. SM was supported by an NSF Graduate Fellowship. EP was supported by a Hertz Fellowship. Other funding came from the PECASE Award. Toyota Research Institute (``TRI'') provided funds to assist the authors with their research but this article solely reflects the opinions and conclusions of its authors and not TRI or any other Toyota entity.

\bibliography{refdb/all}
\bibliographystyle{icml2020}

\clearpage
\appendix
\onecolumn
\section{Datasets}\label{sec:supp_dataset}

\subsection{Osteoarthritis Initiative (OAI)}

\textbf{Description and statistics.}
The source of the knee x-ray dataset is the Osteoarthritis Initiative\footnote{ \url{https://nda.nih.gov/oai/ }}, which compiles radiological and clinical data on patients who have or are at high risk of developing knee osteoarthritis. We follow the dataset processing procedure used by \citet{pierson2019using} in their previous analysis. They analyzed data from the baseline visit and four follow-up timepoints (12-, 24-, 36-, and 48-month follow-ups). Two types of data from this dataset were used in our analysis: the knee x-rays, which served as the input to the neural network, and the clinical concepts associated with osteoarthritis, which were annotated by radiologists for each knee x-ray.

After filtering for observations which contain basic demographic and clinical data, the dataset contains 4,172 patients and 36,369 observations, where an observation is one knee for one patient at one timepoint. We randomly divided patients into training, validation, and test sets, with no overlap in the patient groups. Specifically, we have 21,340 observations from 2,456 people in the training set; 3,709 observations from 421 people in the validation set; and 11,320 observations from 1,295 people in the test set.

\textbf{Image processing.}
To process the knee x-rays, each x-ray was downsampled to 512 x 512 pixels and normalized by dividing pixel values by the maximum pixel value (so all pixel values were in the range 0-1) and then z-scoring. Images were removed if they did not pass OAI x-ray image metadata quality control filters.

\textbf{Clinical concept assessment and KLG merging.}
The primary clinical image feature used in analysis is Kellgren-Lawrence grade (KLG), a 5-level categorical variable (0 to 4) which is assessed by radiologists and used as a standard measure of radiographic osteoarthritis severity, with higher scores denoting more severe disease. In addition to KLG, each knee image is also assessed for 18 other clinical concepts (features) of osteoarthritis in various knee compartments, describing joint space narrowing (JSN), osteophytes, chondrocalcinosis, subchondral sclerosis, cysts, and attrition.

The Osteoarthritis Initiative only assessed these additional 18 clinical concepts (besides joint space narrowing, which is available for all participants) for participants with KLG $\geq$ 2 (a standard threshold for radiographic osteoarthritis) in at least one knee at any time point.
Therefore, in their analysis (and in this paper), \citet{pierson2019using} set these clinical concepts to zero for other participants. This corresponds to assuming that that participants who were never assessed to have osteoarthritis, and thus were not assessed for other clinical concepts, did not display those features. This procedure also means it is impossible to use the clinical concepts to distinguish most x-rays with KLG $=$ 0 from those with KLG $=$ 1 in the dataset.
To evaluate concept bottleneck models on this dataset, we therefore merged the KLG $=$ 0 and KLG $=$ 1 classes into a single level and translated the other KLG levels downwards by 1, leading to a 4-level categorical variable (0 to 3).

\textbf{Concept processing.}
Some of the clinical concepts are very sparse, with almost all x-rays in the dataset showing an absence of the associated radiographic feature. We found that there were insufficient positive training examples to be able to accurately predict these concepts; moreover,
including these sparse concepts in the bottleneck models lowered the accuracy of KLG prediction.
We therefore filtered out the clinical concepts for which the dominant class (corresponding to an absence of the feature) represents $\geq 95\%$ of the training data.

This procedure kept 10 clinical concepts: ``osteophytes femur medial'', ``sclerosis femur medial'', ``joint space narrowing medial'', ``osteophytes tibia medial'', ``sclerosis tibia medial'', ``osteophytes femur lateral'', ``sclerosis femur lateral'', ``joint space narrowing lateral'', ``osteophytes tibia lateral'', and ``sclerosis tibia lateral''. It filtered 8 concepts: ``cysts femur medial'', ``chondrocalcinosis medial'', ``cysts tibia medial'', ``attrition tibia medial'', ``cysts femur lateral'', ``chondrocalcinosis lateral'', ``cysts tibia lateral'', ``attrition tibia lateral''.

After filtering, we z-scored the remaining clinical concepts using the training set to bring them onto the same scale.

Some of the clinical concepts, such as joint space narrowing, are annotated with fractional grades (e.g., 1.2, 1.4, 1.6 etc.) in the dataset. These partial grades represent temporal progression and cannot be deduced by looking at a single timepoint, and they explicitly do not reflect fractional grades (e.g., 1.2 on one patient does not mean it is worse than 1.0 on another patient); we therefore truncate these fractional grades.

\textbf{Reader disagreements and adjudication procedures.}
KLG was read by two expert readers (i.e., radiologists) for each x-ray. Discrepancies in these readings, if they met the adjudication criteria described below, were adjudicated by a third reader: if the third reading agreed with either of the existing readings, then that reading was taken to be final, and otherwise, the three readers attended an adjudication session to form a consensus reading. If discrepancies were not adjudicated, the final reading was taken to be the one from the more senior reader. KLG readings were adjudicated when they disagreed on whether KLG was within 0-1 or 2-4, or when they
there was a difference in the direction of change of KLG between time points.

JSN was also read by two readers, with similar adjudication procedures. Discrepancies were adjudicated if the readers did not agree on the direction of change between time points.

All other clinical concepts in our dataset were read by a single reader. For more information on the adjudication procedures, please refer to the OAI documentation on Project 15.

\subsection{Caltech-UCSD Birds-200-2011 (CUB)}

\textbf{Description and statistics.}
The Caltech-UCSD Birds-200-2011 (CUB) dataset \citep{wah2011cub} comprises 11,788 photographs of birds from 200 species, with each image additionally annotated with 312 binary concepts (before processing) corresponding to bird attributes like wing color, beak shape, etc. Visibility information on each concept is also provided for each image (e.g., is the beak visible in this image?); we use this information to make our test-time intervention experiments more realistic, but not at training time. Since the original dataset only has train and test sets, we randomly split 20\% of the data from the official train set to make a validation set.

\textbf{Concept processing.}
The individual concept annotations are noisy: each annotation was provided by a single crowdworker (not a birding expert), and the concepts can be quite similar to each other, e.g., some crowdworkers might indicate that birds from some species have a red belly, while others might say that the belly is rufous (reddish-brown) instead.

To deal with this issue, we aggregate instance-level concept annotations into class-level concepts via majority voting: e.g., if more than $50\%$ of crows have black wings in the data, then we set all crows to have black wings. This makes the approximation that all birds of the same species in the training data should share the same concept annotations.
While this approximation is mostly true for this dataset, there are some exceptions due to visual occlusion, as well as sexual and age dimorphism.

After majority voting, we further filter out concepts that are too sparse, keeping only concepts (binary attributes) that are present after majority voting in at least 10 classes. After this filtering, we are left with 112 concepts.

\section{Experimental details}\label{sec:supp_experiments}

\subsection{OAI model architecture and training}
The models we use to predict KLG from knee x-rays follow the hyperparameters and model setup used by \citet{pierson2019using}, except for the learning rate and learning rate schedule, which we tune separately. Our models use a ResNet-18 \citep{he2016resnet} pretrained on ImageNet, with the last 12 convolutional layers fine-tuned on the OAI dataset.

For the bottleneck models, the ResNet-18 network extracts high-level features from the image that is used to regress to the concepts $c$ with a single fully-connected layer. Subsequently, there is a 3-layer MLP, with a dimensionality of 50 for the first two layers, that is used to regress to the final KLG $y$. The standard model is similar, except without any loss term that encourages the bottleneck layer to align with the concepts.

For fine-tuning, we use a batch size of 8, with random horizontal and vertical translations as data augmentation. Network weights are optimized with Adam, with beta parameters of 0.9 and 0.999 and an initial learning rate determined by grid search over [0.00005, 0.0005, 0.005], which decays by a factor of 2 every 10 epochs. The network is trained for 30 epochs with early stopping; model weights are set at the conclusion of training to those after the epoch with lowest RMSE for KLG on the validation set.

\subsection{CUB model architecture and training}
The main architecture for fine-grained bird classification is Inception V3, pretrained on ImageNet (except for the fully-connected layers) and then finetuned end-to-end on the CUB dataset. We follow the preprocessing practices described in \citet{cui2018large}. Each image used for training is augmented with random color jittering, random horizontal flip and random cropping with a resolution of 299. During inference, the original image is center-cropped and resized to 299.

For each model, we hyperparameter search on the validation set over a range of learning rates ([0.001, 0.01]), learning rate schedules (keeping learning rate constant or reducing learning rate by 10 times after every [10, 15, 20] epochs until it reaches 0.0001), and regularization strengths ([0.0004, 0.00004]), to find a good hyperparameter configuration. The best model is decided based on task accuracy (or concept accuracy for the $x \to c$ part of sequential models) on the validation set. Once we have found the best-performing hyperparameter configuration, we then retrain the model on both the train and validation sets until convergence, following \citet{cui2018large}.

All training is done with a batch size of 64, and SGD with momentum of 0.9 as the optimizer.
For bottleneck models, we weight each concept's contribution to the overall concept loss equally (which is in turn determined by $\lambda$ for joint bottleneck models).
However, the binary cross-entropy loss used for each individual concept prediction task is weighted by the ratio of class imbalance for that individual concept (which is about 1 : 9 on average) and normalized accordingly. This encourages the model to learn to predict positive concept labels, which are more rare, instead of mostly predicting negative labels.

\subsection{Test-time intervention}
\textbf{OAI.} For OAI, we use the held-out validation set to determine an input-independent ordering for concept intervention. Specifically, we use the concept labels in the validation set to intervene separately on each concept, replacing a single value in our original concept predictions with that ground truth concept. We obtain the intervention ordering by sorting the concepts in descending order of the improvement in KLG accuracy gained from intervening separately on each concept.

\textbf{CUB.} For CUB, the concept groups are determined by having a common prefix in the list of concept names. For example, ``has\_back\_pattern::solid'', ``has\_back\_pattern::spotted'', ``has\_back\_pattern::striped'', ``has\_back\_pattern::multi-colored'' all describe the same group that concerns back-pattern.
Since all models are retrained on both train and validation sets, as described above, we do not follow the OAI procedure of determining a fixed ordering.
Instead, we randomly select concept groups to intervene on at test time, using the class-level labels for all concepts within that group to replace the predicted logits. To avoid intervening on concepts that are not even visible in the image, we use the concept visibility information that comes with the official CUB dataset: for all concepts that are not visible in a given test image, their corrected values are set to 0 regardless of what the corresponding class-level labels may be.

\subsection{Data efficiency}\label{sec:supp_data_efficiency}
For OAI, we subsampled the training and validation data uniformly at random. For CUB, to ensure that each of the 200 classes had similar numbers of examples, we subsampled the images from each class uniformly at random.
To avoid the computational load of hyperparameter searching for each model and degree of subsampling, we adopted the hyperparameters chosen for the best-performing models on the full dataset but did early stopping on the subsampled validation datasets.

\subsection{Linear probes}\label{sec:supp_linear_probes}

\textbf{Standard (end-to-end) models.} For OAI, we separately trained linear probes on the outputs after every ResNet block and the fully-connected layers of the MLP of the standard model. The best-performing linear probe was the one trained on the output of the final ResNet block. For CUB, we ran a linear probe on the fully-connected layer of the standard model, since the $c \to y$ part of the bottleneck models are linear.

\textbf{SENN.}
To evaluate self-explaining neural networks (SENNs) \citep{melis2018towards}, we first trained a SENN model to predict KLG on the OAI dataset and then trained linear probes on the concept layer in the SENN model. We used the open-source implementation from the authors of SENN,\footnote{\url{https://github.com/dmelis/SENN}} and therefore used a classification objective for KLG prediction. To match the expressiveness of our bottleneck models,
we swapped the small CNNs of the SENN concept encoder and relevance parameterizer with our ResNet-18 models. Similarly, for the decoder network in SENN, we used a more expressive decoder comprising 2 fully-connected layers with batch normalization, followed by 5 transposed convolutional layers with upsampling. The decoder was obtained by adapting a public auto-encoder implementation,\footnote{\url{https://github.com/arnaghosh/Auto-Encoder}} changing the dimensionalities of the fully-connected and transposed convolutional layers, and increasing upsampling layers to match our input image size.
We set the number of concepts for SENN to 10, corresponding to the number of clinical features in OAI.
The learning rate was set to 0.0005 and the batch size was set to 4, which was the maximum possible given the memory constraints. With the above settings, the experiments were ran with two different seeds.

\section{Excess errors of independent vs. standard models}\label{sec:supp_proofs}

We present an analysis of the independent bottleneck model, which uses concepts at training time, versus the standard model, which does not. For simplicity, we consider a well-specified linear regression setting with normally-distributed inputs $X \in \R^d$, concepts $C \in \R^k$, and target $Y \in \R$:
\begin{align}
  X &\sim N(0, \sigma_X^2 I_d)\\
  C &= XB + \epsilon_1,\\
  Y &= Cb + \epsilon_2,
\end{align}
where $\epsilon_1 \sim N(0, \sigma_C^2 I_k)$ and $\epsilon_2 \sim N(0, \sigma_Y^2)$.
In contrast to the main text, we use capital letters for $X$, $C$, and $Y$ here to emphasize the fact that the input, concepts, and target are random variables.
In words, the input $X$ is a normally distributed with dimension $d$;
the concepts $C$ of dimension $k$ are a linear transformation of $X$ with additive Gaussian noise; and the output $Y$ is a scalar-valued linear transformation with additive Gaussian noise.
For analytical simplicity, we require $\|b\|^2=1$ and $B^\top B = I_k$.

\textbf{Independent bottleneck model.}
In this setting, the independent bottleneck model comprises two linear regression models: the first estimates the matrix $B$ that takes $X \to C$, and the second estimates the vector $b$ that takes $C \to Y$.
For ease of analysis, we assume that the linear regression models are fit using least squares on separate datasets:
the first dataset has $n_1$ training points in data matrices $\underline{X} \in \R^{n_1 \times d}$ and $\underline{C} \in \R^{n_1 \times k}$, and the second dataset has $n_2$ points in data matrices $\overline{C} \in \R^{n_2 \times k}$ and $\overline{Y} \in \R^{n_2}$.
Concretely, we estimate
\begin{align}
    \hat{B} &= \left( \underline{X}^\top \underline{X} \right)^{-1} \underline{X}^\top \underline{C} \\
    \hat{b} &= \left( \overline{C}^\top \overline{C} \right)^{-1} \overline{C}^\top \overline{Y}
\end{align}
and then compose these estimators into the final prediction $\hat{Y}_{IB} = X \hat{B} \hat{b}$.

\textbf{Standard model.}
In contrast, the standard model does not use concepts, and uses only one dataset with $n$ points in $\underline{X} \in \R^{n \times d}$ and $\underline{Y} \in \R^n$.
Concretely, we can express $Y$ directly in terms of $X$ as $Y = Xv + \epsilon$,
where $v=Bb$ and $\epsilon \sim N(0, \sigma_C^2 + \sigma_Y^2)$.
This gives the least squares estimate
\begin{align}
  \hat{v} = (\underline{X}^\top \underline{X})^{-1} \underline{X}^\top \underline{Y}
\end{align}
and the resulting prediction $\hat{Y}_{SM} = X \hat{v}$.

\textbf{Excess errors.}
We compare these two models using their asymptotic excess error as the number of training points $n_1=n_2=n$ goes to infinity, where a model's excess error is defined as how much higher its mean-squared-error is compared to the optimal estimator  $\mathbb{E}[Y|X]$.

\begin{proposition}[Relative excess error of independent bottleneck models vs. standard models in linear regression]
    Let $n_1=n_2=n$ tend to infinity. Then the ratio of excess errors of the independent bottleneck model to the standard model in the well-specified linear regression setting above is
    \begin{align}
        \lim_{n \rightarrow \infty} \frac{\mathbb{E}[ (Y - \hat{Y}_{IB})^2 ] - \mathbb{E}[ (Y - \mathbb{E}[Y|X])^2 ]}{\mathbb{E}[ (Y - \hat{Y}_{SM})^2 ] - \mathbb{E}[ (Y - \mathbb{E}[Y|X])^2 ]} \leq \frac{\frac{k}{d} \sigma_Y^2 + \sigma_C^2}{\sigma_Y^2 + \sigma_C^2}.\nonumber
    \end{align}
\end{proposition}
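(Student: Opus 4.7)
My plan is to reduce both excess errors to expected squared norms of linear estimator errors around $Bb$, compute each via standard OLS asymptotics in the well-specified Gaussian model, and then form the ratio.

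\textbf{Reduction.} Under the generative model the oracle predictor is $\mathbb{E}[Y\mid X]=XBb$, since both noise terms are mean zero. For a fresh $X\sim N(0,\sigma_X^2 I_d)$ independent of the training data, $\mathbb{E}_X[(Xw)^2\mid w]=\sigma_X^2\|w\|^2$, so after conditioning on training, the two excess errors equal $\sigma_X^2\,\mathbb{E}\|\hat v-Bb\|^2$ and $\sigma_X^2\,\mathbb{E}\|\hat B\hat b-Bb\|^2$ respectively. For the denominator, I would rewrite $Y=X(Bb)+(\epsilon_1 b+\epsilon_2)$; since $\|b\|^2=1$, this is a well-specified regression with homoscedastic noise variance $\sigma_Y^2+\sigma_C^2$, so standard OLS asymptotics under Gaussian $X$ give $\mathrm{Cov}(\hat v)\sim\frac{\sigma_Y^2+\sigma_C^2}{n\sigma_X^2}I_d$, yielding a denominator of $\frac{d(\sigma_Y^2+\sigma_C^2)}{n}$ up to lower order.

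\textbf{Numerator.} The key manipulation is the decomposition $\hat B\hat b-Bb=\hat B(\hat b-b)+(\hat B-B)b$. Because $\hat B$ and $\hat b$ are computed on disjoint samples they are independent, and OLS is conditionally unbiased so $\mathbb{E}[\hat b-b]=0$; hence the cross term in the squared-norm expansion vanishes cleanly. For the first piece $\mathbb{E}\|\hat B(\hat b-b)\|^2$, I would use $\hat B^\top\hat B\to B^\top B=I_k$ and $\overline C^\top\overline C/n\to(\sigma_X^2+\sigma_C^2)I_k$ (the latter because $B^\top(\sigma_X^2 I_d)B+\sigma_C^2 I_k=(\sigma_X^2+\sigma_C^2)I_k$), giving an asymptotic contribution $\frac{k\sigma_Y^2}{n(\sigma_X^2+\sigma_C^2)}$. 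For $\mathbb{E}\|(\hat B-B)b\|^2$, I would treat $\hat B-B$ as the OLS error of $k$ independent regressions on $\underline X$ with per-column noise variance $\sigma_C^2$, which yields $\frac{d\sigma_C^2}{n\sigma_X^2}$ using $\|b\|^2=1$. Multiplying by $\sigma_X^2$ produces a numerator of $\frac{k\sigma_X^2\sigma_Y^2}{n(\sigma_X^2+\sigma_C^2)}+\frac{d\sigma_C^2}{n}$.

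\textbf{Ratio and main obstacle.} Dividing and simplifying gives a limiting ratio of $\frac{(k/d)\cdot[\sigma_X^2/(\sigma_X^2+\sigma_C^2)]\,\sigma_Y^2+\sigma_C^2}{\sigma_Y^2+\sigma_C^2}$, and the stated bound follows from $\sigma_X^2/(\sigma_X^2+\sigma_C^2)\le 1$. The main technical obstacle I expect is promoting convergence-in-probability of the inverse sample-covariance matrices to convergence of expectations, since the proposition compares \emph{expected} squared errors rather than typical ones. I would address this either by invoking the exact inverse-Wishart identity $\mathbb{E}[(\underline X^\top\underline X)^{-1}]=\frac{1}{\sigma_X^2(n-d-1)}I_d$ for $n>d+1$ to work non-asymptotically, or via a uniform-integrability argument after truncating on the event that the smallest singular value of $\underline X^\top\underline X/n$ is bounded below; either route reduces the remainder of the proof to routine trace bookkeeping in a Gaussian linear model.
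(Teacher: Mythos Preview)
Your proposal is correct and takes essentially the same approach as the paper: both reduce the excess errors to $\sigma_X^2\,\mathbb{E}\|\hat B\hat b - Bb\|^2$ and $\sigma_X^2\,\mathbb{E}\|\hat v - Bb\|^2$, decompose the former via independence of the two training sets, and evaluate the resulting terms using the exact inverse-Wishart identity $\mathbb{E}[(\underline X^\top\underline X)^{-1}]=\tfrac{1}{\sigma_X^2(n-d-1)}I_d$ (which you correctly flag as the right tool to close the convergence-of-expectations gap). The only cosmetic difference is that the paper expands $\hat B\hat b - Bb$ into three mutually orthogonal pieces and computes each exactly before taking $n\to\infty$, whereas your grouping $\hat B(\hat b-b)+(\hat B-B)b$ absorbs the $O(1/n^2)$ cross-product term into the first piece and lets it vanish in the limit; both routes yield the same leading-order ratio $\tfrac{(k/d)[\sigma_X^2/(\sigma_X^2+\sigma_C^2)]\sigma_Y^2+\sigma_C^2}{\sigma_Y^2+\sigma_C^2}$.
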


Note that asymptotic relative excess error is small---i.e., the independent bottleneck has lower excess error than the standard model---when $\frac{k}{d}$ is small and $\sigma_Y^2 \gg \sigma_C^2$. This corresponds to low dimensional concepts (relative to the input dimension) and concepts with low noise (relative to the noise in the output).

To prove this proposition, we first derive the expected errors of the independent bottleneck model and the standard model.

\begin{lemma}[Risk of the independent bottleneck model]
\begin{align}
\mathbb{E}[ (Y - \hat{Y}_{IB})^2 ] = \sigma_C^2 + \sigma_Y^2 + \sigma_Y^2 \frac{\sigma_X^2}{\sigma_X^2 + \sigma_C^2} \frac{k}{n_2-k-1} + \sigma_C^2 \frac{d}{n_1 - d - 1} + \sigma_Y^2 \frac{1}{\sigma_X^2 + \sigma_C^2} \frac{1}{n_2 - k - 1} \sigma_C^2 \frac{d}{n_1 - d - 1}.\nonumber
\end{align}
\end{lemma}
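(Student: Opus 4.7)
The plan is to decompose the mean-squared-error of $\hat{Y}_{IB}$ into an irreducible noise floor plus an excess coming from estimator error, split the excess into two pieces that become uncoupled thanks to the disjoint training sets, and then evaluate each piece using Gaussian OLS conditional covariance together with inverse-Wishart moments.

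First, since $\mathbb{E}[Y \mid X] = XBb$ and $\|b\|=1$, the irreducible risk equals $\sigma_C^2 + \sigma_Y^2$. The test input $X \sim N(0, \sigma_X^2 I_d)$ is independent of the training data and hence of $\hat{B}, \hat{b}$, so a standard bias-variance split together with $\mathbb{E}[(Xv)^2] = \sigma_X^2\|v\|^2$ for deterministic $v$ gives
\begin{align*}
\mathbb{E}[(Y - \hat{Y}_{IB})^2] \;=\; \sigma_C^2 + \sigma_Y^2 + \sigma_X^2\, \mathbb{E}\bigl[\|Bb - \hat{B}\hat{b}\|^2\bigr].
\end{align*}
Next I would use the identity $Bb - \hat{B}\hat{b} = B(b - \hat{b}) - (\hat{B}-B)\hat{b}$. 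The crucial observation is that $\hat{B}$ is a function of $(\underline{X},\underline{C})$ while $\hat{b}$ is a function of $(\overline{C},\overline{Y})$; the two training sets are independent, and under the well-specified model $\mathbb{E}[\hat{B}\mid \underline{X}] = B$, so conditioning on $\hat{b}$ and then integrating out $\hat{B}$ shows that the cross term has zero mean. Together with $B^\top B = I_k$, this leaves
\begin{align*}
\mathbb{E}\bigl[\|Bb - \hat{B}\hat{b}\|^2\bigr] \;=\; \mathbb{E}\bigl[\|b - \hat{b}\|^2\bigr] \;+\; \mathbb{E}\bigl[\|(\hat{B}-B)\hat{b}\|^2\bigr].
\end{align*}

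For the first term, conditional on $\overline{C}$ I have $\hat{b} - b \sim N\bigl(0, \sigma_Y^2 (\overline{C}^\top\overline{C})^{-1}\bigr)$; the rows of $\overline{C}$ are i.i.d.\ $N(0, (\sigma_X^2 + \sigma_C^2) I_k)$ by virtue of $B^\top B = I_k$, so $\overline{C}^\top\overline{C}$ is Wishart and $\mathbb{E}[\tr((\overline{C}^\top\overline{C})^{-1})] = k / [(\sigma_X^2+\sigma_C^2)(n_2 - k - 1)]$. Multiplying by $\sigma_X^2$ yields the $\sigma_Y^2 \tfrac{\sigma_X^2}{\sigma_X^2+\sigma_C^2} \tfrac{k}{n_2-k-1}$ contribution. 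For the second term I would condition on $\hat{b}$; writing $\hat{B} - B = (\underline{X}^\top \underline{X})^{-1}\underline{X}^\top \underline{\epsilon}_1$, the $k$ columns of $\hat{B}-B$ are conditionally independent with covariance $\sigma_C^2 (\underline{X}^\top \underline{X})^{-1}$, so $\mathbb{E}[(\hat{B}-B)^\top(\hat{B}-B)] = \sigma_C^2\, \mathbb{E}[\tr((\underline{X}^\top\underline{X})^{-1})]\, I_k$, and the inverse-Wishart moment for $\underline{X}^\top\underline{X}\sim W_d(n_1, \sigma_X^2 I_d)$ gives $\mathbb{E}[\tr((\underline{X}^\top\underline{X})^{-1})] = d / [\sigma_X^2 (n_1 - d - 1)]$. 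Combining with $\mathbb{E}[\|\hat{b}\|^2] = \|b\|^2 + \mathbb{E}[\|\hat{b}-b\|^2] = 1 + \mathbb{E}[\|\hat{b}-b\|^2]$ and multiplying by $\sigma_X^2$ produces both the stand-alone $\sigma_C^2 \tfrac{d}{n_1 - d - 1}$ term and the compound term at the end of the claimed expression.

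The main obstacle is (i) justifying cleanly that the cross term vanishes: this hinges on the \emph{independent} training scheme (so that $\hat{B}$ and $\hat{b}$ are genuinely independent) and on the well-specification that gives $\mathbb{E}[\hat{B}] = B$, and the same decomposition would not simplify for sequential or joint bottlenecks; and (ii) careful book-keeping of the two inverse-Wishart calculations, including the feedback from $\mathbb{E}[\|\hat{b}\|^2]$ exceeding $\|b\|^2$, which is precisely what couples the two estimator errors and yields the final product term.
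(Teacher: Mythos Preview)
Your proposal is correct and follows essentially the same route as the paper: split the risk into the irreducible $\sigma_C^2+\sigma_Y^2$ plus $\sigma_X^2\,\mathbb{E}\bigl[\|Bb-\hat{B}\hat{b}\|^2\bigr]$, expand the estimator error, kill the cross terms via independence of the two training sets and unbiasedness of OLS, and evaluate the surviving pieces with inverse-Wishart moments. The only cosmetic difference is that the paper expands $\hat{B}\hat{b}-Bb$ directly into the three pieces $B\Delta_b + \Delta_B b + \Delta_B\Delta_b$ (with $\Delta_B=\hat{B}-B$, $\Delta_b=\hat{b}-b$) rather than your two-term grouping $B(b-\hat{b})-(\hat{B}-B)\hat{b}$; the two are equivalent once you expand $\mathbb{E}\bigl[\|\hat{b}\|^2\bigr] = 1 + \mathbb{E}\bigl[\|\Delta_b\|^2\bigr]$.
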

\begin{proof}
A direct calculation gives
\begin{align}
    \mathbb{E}[ (Y - \hat{Y}_{IB})^2 ] &= \mathbb{E}[ ((XBb + \epsilon_1 b + \epsilon_2) - X \hat{B} \hat{b})^2] \\
    &= \mathbb{E}[(\epsilon_1 b + \epsilon_2 + X(Bb - \hat{B} \hat{b}))^2] \\
    &= \mathbb{E}[ (\epsilon_1 b + \epsilon_2)^2 ] + \mathbb{E}[ X(Bb - \hat{B}\hat{b}) (Bb - \hat{B} \hat{b})^\top X^\top ] \\
    &= \sigma_C^2 + \sigma_Y^2 + \text{tr} \left( \mathbb{E}[X^\top X] \mathbb{E}[ (Bb - \hat{B}\hat{b}) (Bb - \hat{B} \hat{b})^\top ] \right) \\
    &= \sigma_C^2 + \sigma_Y^2 + \sigma_X^2 \text{tr} \left( \mathbb{E}[ (Bb - \hat{B}\hat{b}) (Bb - \hat{B} \hat{b})^\top ] \right),
\end{align}
where
\begin{align}
    (\hat{B}\hat{b} - Bb) &= \left( \underline{X}^\top \underline{X} \right)^{-1} \underline{X}^\top (\underline{X} B + \underline{\epsilon_1}) \left( \overline{C}^\top \overline{C} \right)^{-1} \overline{C}^\top (\overline{C} b + \overline{\epsilon_2}) - Bb \\
    &= \left(B + \left( \underline{X}^\top \underline{X} \right)^{-1} \underline{X}^\top \underline{\epsilon_1} \right) \left(b + \left( \overline{C}^\top \overline{C} \right)^{-1} \overline{C}^\top \overline{\epsilon_2} \right) - Bb \\
    &= B \left( \overline{C}^\top \overline{C} \right)^{-1} \overline{C}^\top \overline{\epsilon_2} + \left( \underline{X}^\top \underline{X} \right)^{-1} \underline{X}^\top \underline{\epsilon_1} b + \left( \underline{X}^\top \underline{X} \right)^{-1} \underline{X}^\top \underline{\epsilon_1} \left( \overline{C}^\top \overline{C} \right)^{-1} \overline{C}^\top \overline{\epsilon_2}.
\end{align}

We need to evaluate the expectation of this expression multiplied with itself, $\mathbb{E}[ (Bb - \hat{B}\hat{b}) (Bb - \hat{B} \hat{b})^\top ]$.
Note that the cross terms will cancel since $\underline{\epsilon_1}$ and $\overline{\epsilon_2}$ are independent of other random variables and have mean $0$, $\mathbb{E}[\underline{\epsilon_1}]=\mathbb{E}[\overline{\epsilon_2}]=0$.
This leaves three remaining direct (squared) terms, which we can evaluate separately since $\text{tr}$ and $\mathbb{E}$ are linear operators.

The first term is
\begin{align}
    & \text{tr}\left( \mathbb{E}\left[ B \left( \overline{C}^\top \overline{C} \right)^{-1} \overline{C}^\top \overline{\epsilon_2} \overline{\epsilon_2}^\top \overline{C} \left( \overline{C}^\top \overline{C} \right)^{-1} B^\top \right] \right) \\
    &= \text{tr}\left( \mathbb{E}\left[ \overline{C}^\top \left( \overline{C}^\top \overline{C} \right)^{-1} B^\top B \left( \overline{C}^\top \overline{C} \right)^{-1} \overline{C}^\top \right] \mathbb{E} \left[\overline{\epsilon_2} \overline{\epsilon_2}^\top \right] \right) \\
    &= \text{tr} \left( \mathbb{E}\left[ \overline{C}^\top \left( \overline{C}^\top \overline{C} \right)^{-1} I_k \left( \overline{C}^\top \overline{C} \right)^{-1} \overline{C}^\top\right] \sigma_Y^2 I_{n_2} \right) \\
    &= \sigma_Y^2 \text{tr} \left( \mathbb{E}\left[ \left( \overline{C}^\top \overline{C} \right)^{-1}\right]  \right).
\end{align}
The expression within the above expectation is distributed as an inverse Wishart distribution, and therefore
\begin{align}
     & \sigma_Y^2 \text{tr} \left( \mathbb{E}\left[ \left( \overline{C}^\top \overline{C} \right)^{-1}\right]  \right) \\
     &= \sigma_Y^2 \text{tr} \left( \frac{\mathbb{E}\left[C^\top C\right]^{-1}}{n_2 - k - 1} \right) \\
     &= \sigma_Y^2 \frac{1}{\sigma_X^2 + \sigma_C^2} \frac{k}{n_2-k-1},
\end{align}
where the last equality comes from $\mathbb{E}[C^\top C] = \sigma_X^2 B^\top B + \sigma_C^2 I_k = (\sigma_X^2 + \sigma_C^2) I_k$.

The second term follows a similar calculation:
\begin{align}
    &\text{tr} \left( \mathbb{E}\left[ \left( \underline{X}^\top \underline{X} \right)^{-1} \underline{X}^\top \underline{\epsilon_1} b b^\top \underline{\epsilon_1}^\top \underline{X} \left( \underline{X}^\top \underline{X} \right)^{-1} \right] \right) \\
    &= \text{tr} \left( \mathbb{E}\left[ \underline{X}^\top \left( \underline{X}^\top \underline{X} \right)^{-1}  \left( \underline{X}^\top \underline{X} \right)^{-1} \underline{X}^\top \right] \mathbb{E}\left[ \underline{\epsilon_1} b b^\top \underline{\epsilon_1}^\top   \right] \right) \\
    &= \sigma_C^2 \text{tr} \left( \mathbb{E}\left[  \left( \underline{X}^\top \underline{X} \right)^{-1} \right] \right) \\
    &= \sigma_C^2 \frac{1}{\sigma_X^2} \frac{d}{n_1 - d -1},
\end{align}
where the second equality follows because $\underline{\epsilon_1} b$ is normally distributed with mean $0$ and covariance $\sigma_C^2 I_{n_1}$.

The third term is
\begin{align}
    &\text{tr} \left( \mathbb{E}\left[ \left( \underline{X}^\top \underline{X} \right)^{-1} \underline{X}^\top \underline{\epsilon_1} \left( \overline{C}^\top \overline{C} \right)^{-1} \overline{C}^\top \overline{\epsilon_2} \overline{\epsilon_2}^\top \overline{C} \left( \overline{C}^\top \overline{C} \right)^{-1} \underline{\epsilon_1}^\top \underline{X} \left( \underline{X}^\top \underline{X} \right)^{-1} \right] \right) \\
    &= \text{tr} \left( \mathbb{E}\left[ \overline{C} \left( \overline{C}^\top \overline{C} \right)^{-1} \underline{\epsilon_1}^\top \underline{X} \left( \underline{X}^\top \underline{X} \right)^{-1} \left( \underline{X}^\top \underline{X} \right)^{-1} \underline{X}^\top \underline{\epsilon_1} \left( \overline{C}^\top \overline{C} \right)^{-1} \overline{C}^\top \overline{\epsilon_2} \overline{\epsilon_2}^\top \right] \right) \\
    &= \sigma_Y^2 \text{tr} \left( \mathbb{E}\left[  \underline{\epsilon_1}^\top \underline{X} \left( \underline{X}^\top \underline{X} \right)^{-1} \left( \underline{X}^\top \underline{X} \right)^{-1} \underline{X}^\top \underline{\epsilon_1} \left( \overline{C}^\top \overline{C} \right)^{-1}  \right] \right) \\
    &= \sigma_Y^2 \frac{1}{\sigma_X^2 + \sigma_C^2} \frac{1}{n_2 - k - 1} \text{tr} \left( \mathbb{E}\left[  \underline{\epsilon_1}^\top \underline{X} \left( \underline{X}^\top \underline{X} \right)^{-1} \left( \underline{X}^\top \underline{X} \right)^{-1} \underline{X}^\top \underline{\epsilon_1}  \right] \right) \\
    &= \sigma_Y^2 \frac{1}{\sigma_X^2 + \sigma_C^2} \frac{1}{n_2 - k - 1} \sigma_C^2 \text{tr} \left( \mathbb{E}\left[ \underline{X} \left( \underline{X}^\top \underline{X} \right)^{-1} \left( \underline{X}^\top \underline{X} \right)^{-1} \underline{X}^\top  \right] \right) \\
    &= \sigma_Y^2 \frac{1}{\sigma_X^2 + \sigma_C^2} \frac{1}{n_2 - k - 1} \sigma_C^2 \frac{1}{\sigma_X^2} \frac{d}{n_1 - d - 1}.
\end{align}

Putting the three terms together,
\begin{align}
    &\text{tr} \left( \mathbb{E}[ (\hat{B}\hat{b} - Bb) (\hat{B} \hat{b} - Bb)^\top ] \right) \\
    &= \sigma_Y^2 \frac{1}{\sigma_X^2 + \sigma_C^2} \frac{k}{n_2-k-1} + \sigma_C^2 \frac{1}{\sigma_X^2} \frac{d}{n_1 - d -1} + \sigma_Y^2 \frac{1}{\sigma_X^2 + \sigma_C^2} \frac{1}{n_2 - k - 1} \sigma_C^2 \frac{1}{\sigma_X^2} \frac{d}{n_1 - d - 1},
\end{align}
so the expected squared error is
\begin{align}
     &\mathbb{E}[ (Y - \hat{Y}_{IB})^2 ] \\
     &=\sigma_C^2 + \sigma_Y^2 + \sigma_Y^2 \frac{\sigma_X^2}{\sigma_X^2 + \sigma_C^2} \frac{k}{n_2-k-1} + \sigma_C^2 \frac{d}{n_1 - d - 1} + \sigma_Y^2 \frac{1}{\sigma_X^2 + \sigma_C^2} \frac{1}{n_2 - k - 1} \sigma_C^2 \frac{d}{n_1 - d - 1}.
\end{align}
\end{proof}

\begin{lemma}[Risk of the standard model]
\begin{align}
\mathbb{E}[ (Y - \hat{Y}_{SM})^2 ] = \sigma_C^2 + \sigma_Y^2 + \frac{d (\sigma_C^2 + \sigma_Y^2)}{n-d-1}.\nonumber
\end{align}
\end{lemma}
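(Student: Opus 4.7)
The plan is to mirror the risk decomposition used for the independent bottleneck model, but leveraging the fact that the standard model involves only a single least-squares regression so the argument is considerably cleaner; of the three error terms that appeared in the previous lemma, only the analogue of the second one survives.

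First I would simplify the data-generating process from the standard model's perspective. Since $\|b\|^2 = 1$, the composite noise $\epsilon \eqdef \epsilon_1 b + \epsilon_2$ is $N(0, \sigma_C^2 + \sigma_Y^2)$ and independent of $X$, so $Y = Xv + \epsilon$ with $v = Bb$, as noted in the excerpt. The fitted vector is then just OLS, $\hat{v} = (\underline{X}^\top \underline{X})^{-1}\underline{X}^\top \underline{Y}$, evaluated at a fresh test point $X$ independent of the training data $(\underline{X}, \underline{Y})$.

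Next I would expand
\begin{align*}
\mathbb{E}[(Y - \hat{Y}_{SM})^2] = \mathbb{E}[\epsilon^2] + \mathbb{E}\!\left[X(v - \hat{v})(v - \hat{v})^\top X^\top\right],
\end{align*}
where the cross term vanishes because $\epsilon$ has mean zero and is independent of $(X, \hat{v})$. The noise term contributes $\sigma_C^2 + \sigma_Y^2$ immediately. For the variance term, I substitute $\hat{v} - v = (\underline{X}^\top \underline{X})^{-1} \underline{X}^\top \underline{\epsilon}$, where $\underline{\epsilon} \in \R^n$ has i.i.d.\ $N(0, \sigma_C^2 + \sigma_Y^2)$ entries independent of $\underline{X}$. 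Using independence of $X$ from $\hat{v}$ together with the cyclic trace identity, this term becomes
\begin{align*}
\text{tr}\!\left(\mathbb{E}[X^\top X]\,\mathbb{E}[(\hat{v} - v)(\hat{v} - v)^\top]\right) = (\sigma_C^2 + \sigma_Y^2)\,\sigma_X^2\,\text{tr}\!\left(\mathbb{E}\!\left[(\underline{X}^\top \underline{X})^{-1}\right]\right).
\end{align*}
Applying the inverse-Wishart moment $\mathbb{E}[(\underline{X}^\top \underline{X})^{-1}] = \frac{1}{\sigma_X^2(n - d - 1)} I_d$, which was already invoked in the previous lemma, yields $\frac{d(\sigma_C^2 + \sigma_Y^2)}{n - d - 1}$, and summing with the noise term gives the stated expression.

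I do not anticipate a genuine obstacle: the calculation is a strict specialization of what was done for the independent bottleneck (no composition of two estimators, and no cross terms between $\underline{\epsilon_1}$ and $\overline{\epsilon_2}$ to track). The only mild subtlety is the finiteness condition $n > d + 1$ needed for the inverse-Wishart moment to exist, which is implicit in the stated result through the $\frac{1}{n - d - 1}$ factor.
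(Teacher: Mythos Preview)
Your proposal is correct and follows essentially the same approach as the paper: the same decomposition into irreducible noise plus $\sigma_X^2\,\text{tr}\bigl(\mathbb{E}[(\hat v - v)(\hat v - v)^\top]\bigr)$, the same substitution $\hat v - v = (\underline{X}^\top\underline{X})^{-1}\underline{X}^\top\underline{\epsilon}$, and the same inverse-Wishart moment to evaluate the trace. Your added remark about the implicit requirement $n > d+1$ is a nice touch that the paper leaves tacit.
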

\begin{proof}
A direct calculation gives
\begin{align}
    \mathbb{E}[ (Y - \hat{Y}_{SM})^2 ] &= \mathbb{E}[((Xv + \epsilon) - X\hat{v})^2] \\
    &= \mathbb{E}[(\epsilon + X(v - \hat{v}))^2]\\
    &= \mathbb{E}[\epsilon^2] + \mathbb{E}[X(v-\hat{v})(v-\hat{v})^\top X^\top] \\
    &= \sigma_C^2 + \sigma_Y^2 + \text{tr} \left( \mathbb{E}[X^\top X] \mathbb{E}[(v-\hat{v})(v-\hat{v})^\top] \right) \\
    &= \sigma_C^2 + \sigma_Y^2 + \sigma_X^2 \text{tr} \left( \mathbb{E}[(v-\hat{v})(v-\hat{v})^\top] \right).
\end{align}
Since
\begin{align}
    \hat{v} - v &= (\underline{X}^\top \underline{X})^{-1} \underline{X}^\top (\underline{X} v + \underline{\epsilon}) - v \\
    &= (\underline{X}^\top \underline{X})^{-1} \underline{X}^\top \underline{\epsilon},
\end{align}
we have
\begin{align}
    \text{tr} \left( \mathbb{E}[(v-\hat{v})(v-\hat{v})^\top] \right) &= \text{tr} \left( \mathbb{E}[ (\underline{X}^\top \underline{X})^{-1} \underline{X}^\top \underline{\epsilon} \underline{\epsilon}^\top \underline{X} (\underline{X}^\top \underline{X})^{-1}] \right) \\
    &= \text{tr} \left( \mathbb{E}[\underline{X} (\underline{X}^\top \underline{X})^{-1} (\underline{X}^\top \underline{X})^{-1} \underline{X}^\top] \mathbb{E}[\underline{\epsilon} \underline{\epsilon}^\top  ] \right) \\
    &= (\sigma_C^2 + \sigma_Y^2) \text{tr} \left( \mathbb{E}[ (\underline{X}^\top \underline{X})^{-1} ] \right) \\
    &= (\sigma_C^2 + \sigma_Y^2) \frac{1}{\sigma_X^2} \frac{d}{n - d - 1}.
\end{align}

Plugging this back into the expression for $\mathbb{E}[ (Y - \hat{Y}_{SM})^2 ]$ yields
\begin{align}
    \mathbb{E}[ (Y - \hat{Y}_{SM})^2 ] &= \sigma_C^2 + \sigma_Y^2 + \frac{d (\sigma_C^2 + \sigma_Y^2)}{n-d-1}.
\end{align}

\end{proof}

\begin{proof}[Proof of Proposition 1]

Note that the optimal estimator has risk
\begin{align}
    \mathbb{E}[ (Y - \mathbb{E}[Y|X])^2 ] &= \mathbb{E}[ \epsilon^2 ] \\
    &= \sigma_C^2 + \sigma_Y^2.
\end{align}
Thus, from Lemmas 1 and 2, the ratio of excess errors is
\begin{align}
    &\frac{\mathbb{E}[ (Y - \hat{Y}_{IB})^2 ] - \mathbb{E}[ (Y - \mathbb{E}[Y|X])^2 ]}{\mathbb{E}[ (Y - \hat{Y}_{SM})^2 ] - \mathbb{E}[ (Y - \mathbb{E}[Y|X])^2 ]} \\
    &= \frac{n - d - 1}{d (\sigma_C^2 + \sigma_Y^2)} \left( \sigma_Y^2 \frac{\sigma_X^2}{\sigma_X^2 + \sigma_C^2} \frac{k}{n_2-k-1} + \sigma_C^2 \frac{d}{n_1 - d - 1} + \sigma_Y^2 \frac{1}{\sigma_X^2 + \sigma_C^2} \frac{1}{n_2 - k - 1} \sigma_C^2 \frac{d}{n_1 - d - 1} \right).
\end{align}
Taking the limit as $n$ goes to infinity and letting $n_1=n_2=n$ gives the desired result
\begin{align}
    \lim_{n \rightarrow \infty} \frac{\mathbb{E}[ (Y - \hat{Y}_{IB})^2 ] - \mathbb{E}[ (Y - \mathbb{E}[Y|X])^2 ]}{\mathbb{E}[ (Y - \hat{Y}_{SM})^2 ] - \mathbb{E}[ (Y - \mathbb{E}[Y|X])^2 ]}
    &= \frac{\sigma_Y^2}{\sigma_C^2+\sigma_Y^2} \frac{\sigma_X^2}{\sigma_X^2 + \sigma_C^2} \frac{k}{d}  + \frac{\sigma_C^2}{\sigma_C^2 + \sigma_Y^2} \\
    &\leq \frac{\frac{k}{d} \sigma_Y^2 + \sigma_C^2}{\sigma_Y^2 + \sigma_C^2}.
\end{align}
\end{proof}

\end{document}